\documentclass{article}

 \usepackage[preprint]{neurips_2026}

\usepackage[utf8]{inputenc} 
\usepackage[T1]{fontenc}    
\usepackage{hyperref}       
\usepackage{url}            
\usepackage{booktabs}       
\usepackage{amsfonts}       
\usepackage{nicefrac}       
\usepackage{microtype}      
\usepackage{xcolor}         
\usepackage{graphicx}
 \usepackage{wrapfig}
\usepackage{amsmath}
\usepackage{amssymb}
\usepackage{mathtools}
\usepackage{amsthm}
\usepackage{pifont}
\newcommand{\cmark}{\ding{51}}%
\newcommand{\xmark}{\ding{55}}%

\usepackage{multirow}
\usepackage{hyperref}
\definecolor{nicergreen}{RGB}{0, 100, 0}
\definecolor{nicerblue}{RGB}{0, 51, 153}
\hypersetup{
    colorlinks=true,
    citecolor=nicerblue,
    linkcolor=nicerblue,
    urlcolor=nicerblue
}

\usepackage{enumitem}
\setlist[itemize]{
    leftmargin=*,        
    itemsep=3pt,         
    topsep=2pt,          
    parsep=0pt           
}

\theoremstyle{plain}
\newtheorem{theorem}{Theorem}[section]
\newtheorem{proposition}[theorem]{Proposition}
\newtheorem{lemma}[theorem]{Lemma}
\newtheorem{corollary}[theorem]{Corollary}
\theoremstyle{definition}
\newtheorem{definition}[theorem]{Definition}

\theoremstyle{remark}
\newtheorem{remark}[theorem]{Remark}

\newcommand{\method}{\texttt{STRAND}}

\title{From Persistence to Survival: Hypothesis Testing, Effect Sizes and Vectorisation for Topological Features}

\author{%
  Juliette Murris \quad Bernadette Stolz \quad Karsten Borgwardt 
  \vspace{0.2cm} \\
  Department of Machine Learning and Systems Biology, \\
  Max Planck Institute of Biochemistry, Martinsried, Germany 
  \vspace{0.2cm} \\
  Code: will be released upon publication.
}

\begin{document}

\maketitle

\begin{abstract}
    Persistence diagrams are common representations in topological data analysis, but they do not naturally live in a vector space, and the statistical tools developed for comparing them have largely evolved separately from those used for downstream prediction. We introduce \method~(Survival Topological Representation ANalysis of Diagrams), which treats (collections of) PDs as survival data: each topological feature with persistence value $p = d - b$ is a fully observed time-to-event, and the persistence survival function $S(t) = \mathbb{P}(p > t)$ is the central object for comparing diagrams. From this single representation we derive (i) a non-parametric two-sample test with calibrated Type~I error and high power from a small number of diagrams; (ii) interpretable effect sizes; and (iii) a 1-Wasserstein-stable feature vector for downstream machine learning. We validate calibration and power on synthetic manifolds with controlled topology, demonstrate competitive vectorisation across 14 graph and 3D point cloud benchmarks, and apply the method to study functional brain connectivity in fMRI/neuroscience data. To our knowledge, \method~is the first method to provide hypothesis testing and vectorisation for persistence diagrams from a single coherent and interpretable representation.
\end{abstract}

\section{Introduction}

The shape of data carries information that local statistics miss. A protein's function depends on how amino acids fold into loops and cavities; a brain's organisation manifests in the topology of its functional connections; a drug's efficacy depends on the shape of its molecular graph. Topological data analysis (TDA)~\citep{robins2000computational,ghrist2008barcodes,carlsson2009topology} provides mathematical tools to extract such shape-based descriptors. Persistent homology (PH), the most commonly used method in TDA, tracks topological features (connected components, loops, voids) as they appear and disappear across scales~\citep{edelsbrunner2002topological,zomorodian2004computing}. The output is a persistence diagram (PD), a multiset of birth-death pairs encoding the scales when each feature emerges and vanishes in the data.

Despite their descriptive power, PDs pose two challenges for practitioners~\citep{otter2017roadmap,chazal2021introduction}. First, given PDs from two conditions, topological difference characterisation remains descriptive rather than confirmatory without principled hypothesis tests~\citep{bubenik2007statistical,fasy2014confidence}. Second, PDs are multisets of varying cardinality, incompatible with machine learning algorithms that expect fixed-dimensional inputs~\citep{ali2023survey}. Existing work addresses these challenges separately, producing disconnected toolkits. Vectorisation methods~\citep{adamsPersistenceImagesStable2017,bubenik2017persistence,reininghaus2015stable,carriere2020perslay} enable classification but provide no statistical inference. Conversely, hypothesis testing methods~\citep{robinson2017hypothesis,islambekov2024vector,kwitt2015statistical,schrab2023mmd} yield $p$-values via permutation or kernel procedures but provide no effect sizes, no localisation of where groups differ, and no representation that connects to downstream prediction. A practitioner who wants both classification and statistical comparison must use two different methods operating on two different objects, with the features driving prediction different to the ones tested for significance, breaking the interpretive link between what is predicted and what is reported as significant.

We close this gap with \method~(Survival Topological Representation ANalysis of Diagrams) by mapping PDs to survival data (Figure~\ref{fig:figure1}). Each topological feature with persistence value $p_i = d_i - b_i$ is treated as a fully observed time-to-event observation. The obtained persistence survival function $S(t) = \mathbb{P}(p > t)$ enables comparison between groups, yielding a hypothesis test with interpretable effect sizes. Besides, its discretisation yields a stable feature vector for downstream machine learning. Our contributions are:
\begin{itemize}
    \item \textbf{Hypothesis testing for (collections of) persistence diagrams} (Section~\ref{sec:hypothesis_testing}). We introduce $S(t) = \mathbb{P}(p > t)$ as the central object for comparing collections of diagrams, and we use the log-rank test to compare $S_A$ to $S_B$ between two cohorts. The test maintains calibrated Type~I error on synthetic manifolds and achieves high power from as few as two diagrams per group.
    \item \textbf{Interpretable, dimension-stratified effect sizes} (Sections~\ref{sec:hypothesis_testing}--\ref{sec:abide}). Beyond the scalar $p$-values returned by existing PD inference, \method~reports hazard ratios with bootstrap confidence intervals, median persistence differences, and Kaplan-Meier curves that localise group differences across the persistence axis and across homological dimensions.
    \item \textbf{A stable vectorisation derived from the same object} (Section~\ref{sec:vectorisation}). Discretising $\hat S$ on a grid yields a fixed-dimensional, 1-Wasserstein-stable vector (Theorem~\ref{thm:stability}). It is competitive with state-of-the-art vectorisations across 14 graph and 3D point cloud benchmarks at 30--85$\times$ lower feature dimensionality than persistence images.
    \item \textbf{Application to autism connectivity} (Section~\ref{sec:abide}). On functional brain connectivity diagrams across three brain atlases, \method~jointly delivers per-dimension hypothesis tests with effect sizes, head-to-head against three established PD-inference baselines.
\end{itemize}





\begin{figure}[t]
  \begin{center}
    \centerline{\includegraphics[width=\linewidth]{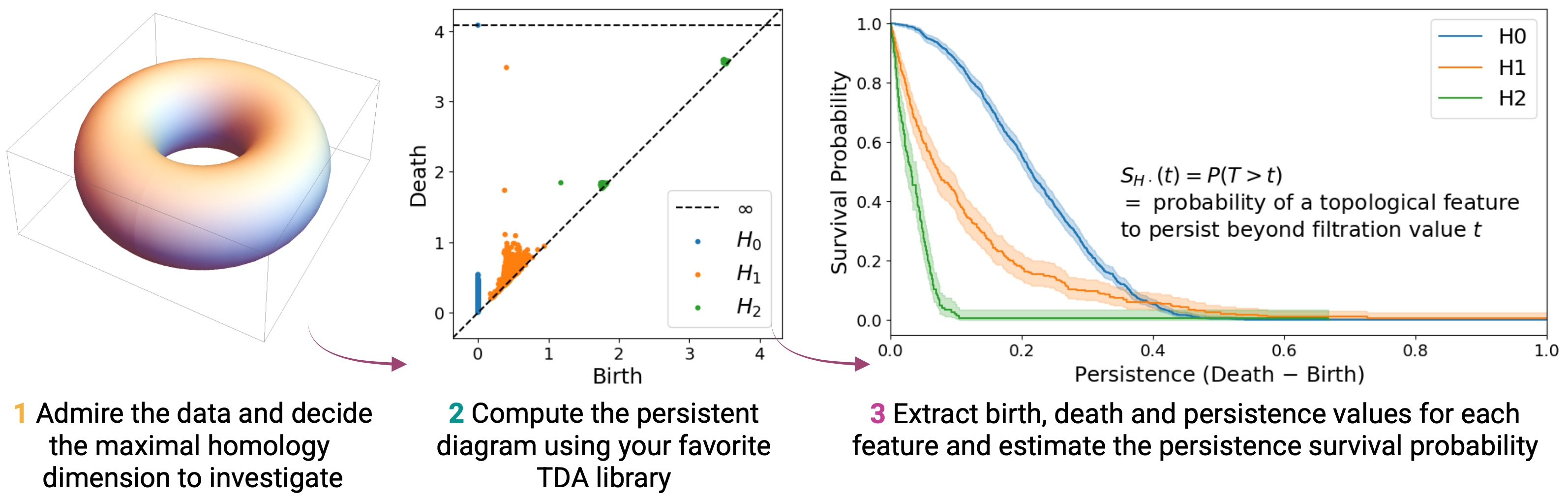}}
    \caption{From a point cloud (here, a torus in 1), persistent homology produces a persistence diagram with births and deaths in dimensions $H_0$, $H_1$, $H_2$ (2). Each feature is treated as a survival observation on the persistence axis $p = d - b$, and (3) \method~yields persistence survival probabilities.}
    \label{fig:figure1}
  \end{center}
\vspace{-0.7cm}
\end{figure}

\section{Related Work}\label{sec:bckgrd}

\paragraph{Persistent homology.}
Persistent homology (PH) tracks topological features across scales~\citep{edelsbrunner2002topological,zomorodian2004computing,carlsson2009topology}. Given a filtration $\mathcal{F} = \{K_\epsilon\}_{\epsilon \geq 0}$ of nested simplicial complexes (e.g., Vietoris-Rips on point clouds, degree-based on graphs), PH records when connected components ($H_0$), loops ($H_1$), and voids ($H_2$) appear and disappear. The output is a persistence diagram (PD) $\mathcal{D} = \{(b_i, d_i)\}_{i=1}^n$ with persistence $p_i = d_i - b_i$. PDs are stable under the Wasserstein and bottleneck distances~\citep{cohen2005stability,skraba2020wasserstein}.

\paragraph{Vectorisation.}
Vectorisation methods map PDs to fixed-dimensional vectors compatible with machine learning. Persistence Images~\citep{adamsPersistenceImagesStable2017} and Persistence Landscapes~\citep{bubenik2017persistence} produce stable representations via Gaussian smoothing and piecewise-linear functions respectively; Betti curves~\citep{giusti2015clique} and the broader Persistence Curves family~\citep{chung2022persistence} aggregate feature counts under a weight-function parametrisation (see Appendix~\ref{app:vs_betti} for a geometric comparison). Kernel methods~\citep{reininghaus2015stable,carriere2017sliced,le2018persistence} embed diagrams into reproducing kernel Hilbert space for kernel machines. Neural approaches learn representations directly~\citep{carriere2020perslay,xin2023gril,verma2024topological}; see~\citet{ali2023survey} for a survey. None of these methods provides hypothesis testing with interpretable effect sizes.

\paragraph{Statistical inference.}
\citet{robinson2017hypothesis} introduced a permutation test on Wasserstein distances, and \citet{islambekov2024vector} extended this to vectorised Betti functions. Confidence-set methods~\citep{fasy2014confidence} target single-diagram inference. Kernel-based two-sample tests apply maximum mean discrepancy~\citep{gretton2012kernel} via persistence-specific kernels~\citep{kwitt2015statistical,reininghaus2015stable,carriere2017sliced,schrab2023mmd}. Persistence kernels can in principle do both, but in practice the two are split, with a
kernel for classification, and a separate permutation or MMD test for inference. Across this landscape, every method yields a $p$-value but none yields an interpretable effect size or localises group differences to specific filtration scales (Appendix~\ref{app:landscape}, Table~\ref{tab:landscape}).

\paragraph{Survival analysis.}
Survival analysis models time-to-event data~\citep{clark2003survival}. The survival function $S(t) = \mathbb{P}(T > t)$ gives the probability that a random time $T$ exceeds $t$, and the Kaplan-Meier estimator~\citep{kaplan1958nonparametric} provides a non-parametric estimate from observed event times. The log-rank test~\citep{mantel1966evaluation} compares two survival distributions under $H^\text{null}: S_A = S_B$, with the median survival times and hazard ratios quantifying effect sizes. 

Further background on TDA and survival analysis appears in Appendix~\ref{app:bckgrd}.

\section{Survival Topological Representation ANalysis of Diagrams (\method)}\label{sec:mthds}

We introduce \method, that maps persistence diagrams (PDs) to survival data via the \emph{persistence survival function}. The same object underlies complementary capabilities: a non-parametric two-sample test for comparing topological structure between groups (Section~\ref{sec:hypothesis_testing}) along with effect sizes, and a fixed-dimensional vectorisation for downstream machine learning (Section~\ref{sec:vectorisation}). We first define the central object and establish its stability (Section~\ref{sec:p2s}); inference and vectorisation then inherit these guarantees.

\subsection{From Persistence to Survival}\label{sec:p2s}

Let $\mathcal{D} = \{(b_i, d_i)\}_{i=1}^n$ denote a PD, where each topological feature $i$ appears at filtration value $b_i$ and disappears at $d_i$. The \emph{persistence value} $p_i = d_i - b_i$ measures the lifetime of feature $i$ across the filtration. \method~treats each $p_i$ as a fully observed time-to-event and the multiset $\{p_i\}_{i=1}^n$ as a survival sample. The correspondence with classical survival analysis is detailed in Appendix~\ref{app:bridge}. We define the \textbf{persistence survival function}
\begin{equation}\label{eq:surv}
    S(t) = \mathbb{P}(p > t),
\end{equation}
the probability that a randomly chosen feature persists beyond filtration scale $t$. Equivalently, $S$ is determined by the cumulative hazard $\Lambda(t) = -\log S(t)$, which records the cumulative rate of feature disappearance up to $t$. $S$ takes values in $[0,1]$ and is the natural object for vectorisation (Section~\ref{sec:vectorisation}), while $\Lambda$ is additive across non-overlapping intervals, which is what makes the log-rank test of Section~\ref{sec:hypothesis_testing} a sum of independent contributions.

\paragraph{Non-parametric estimation.}
Given a PD with distinct ordered persistence values $t_1 < t_2 < \cdots < t_m$, let $n_j$ be the number of features at risk just before $t_j$ (i.e., features with $p_i \geq t_j$) and $e_j$ the number of events at $t_j$. The Kaplan--Meier estimator of the survival function and the Nelson--Aalen estimator of the cumulative hazard are
\begin{equation}\label{eq:km_standard}
    \hat S(t) = \prod_{t_j \leq t} \frac{n_j - e_j}{n_j},
    \qquad
    \hat\Lambda(t) = \sum_{t_j \leq t} \frac{e_j}{n_j},
\end{equation}
with variance estimated by Greenwood's formula $\widehat{\mathrm{Var}}(\hat S(t)) = \hat S(t)^2 \sum_{t_j \leq t} e_j / (n_j(n_j - e_j))$ \citep{kaplan1958nonparametric,aalen1978nonparametric}. PDs additionally exhibit a delayed-entry structure, with features only observable between birth and death. This is accommodated through left truncation, with the at-risk set redefined as $n_j = \#\{i : b_i < t_j \leq d_i\}$ . While Eq.~\eqref{eq:km_standard} corresponds to a marginal model on persistence values, the left-truncated form corresponds to a model on death times conditional on entry at birth. 

\paragraph{Estimands and sampling model.}
We distinguish two sampling levels. \emph{Within a single diagram}, the multiset $\{p_i\}_{i=1}^n$ defines an empirical persistence measure $\hat\mu_{\mathcal{D}} = \tfrac{1}{n}\sum_{i=1}^n \delta_{p_i}$, and $\hat S$ is its empirical survival function. \emph{Across diagrams sampled from a distribution $P$ on $\mathcal{D}$}, the pooled estimand is
\begin{equation}\label{eq:pooled_estimand}
    S_P(t) = \mathbb{E}_{D \sim P}\!\left[\, \mathbb{P}_{p \mid D}(p > t) \,\right],
\end{equation}
the marginal under the mixture of sampling a diagram from $P$ then a feature uniformly from that diagram, well-defined whenever $\mathbb{E}_P[|D|] < \infty$. The test of Section~\ref{sec:hypothesis_testing} compares $S_{P_A}$ to $S_{P_B}$.

\paragraph{Moments and functionals of persistence.}
As a useful consequence of the survival representation, the persistence survival function determines all integrable functionals of persistence through
\begin{equation}\label{eq:moments}
    \mathbb{E}[\,g(p)\,] = \int_0^\infty g'(t)\, S(t)\, dt
\end{equation}
for any absolutely continuous $g$ with $g(0) = 0$. Setting $g(t) = t^k$ gives $\mathbb{E}[p^k] = k\int_0^\infty t^{k-1} S(t)\, dt$; in particular $\mathbb{E}[p] = \int_0^\infty S(t)\, dt$, the mean persistence. Eq.~\eqref{eq:moments} shows that classical scalar summaries of a diagram (mean, variance, total persistence, $L^p$ norms of persistence) are linear functionals of $S$.

\paragraph{Stability.}
The survival vectorisation $\phi_S(\,\cdot\,;\mathbf{t}): \mathcal{D} \to [0,1]^k$, evaluating $\hat S$ on a fixed grid $\mathbf{t} = (t_1, \ldots, t_k)$, is Lipschitz stable in the 1-Wasserstein topology on PDs. Both the hypothesis test and the downstream classifier inherit this stability since both act on $\hat S$ through $\phi_S$. Full proofs are deferred to Appendix~\ref{app:proofs}.

\begin{theorem}[Stability]\label{thm:stability}
Let $D = \{(b_i, d_i)\}_{i=1}^n$ and $D' = \{(b'_j, d'_j)\}_{j=1}^m$ be PDs with $n, m \geq 1$ off-diagonal points. Then
\begin{equation}\label{eq:stability_main}
\|\phi_S(D;\mathbf{t}) - \phi_S(D';\mathbf{t})\|_\infty
\;\leq\;
\frac{4\, W_1(D, D')}{\delta_\Delta \cdot \max(n, m)}
\;+\;
\frac{2\, W_1(D, D')}{\delta \cdot \min(n, m)},
\end{equation}
where $W_1(D, D')$ is the 1-Wasserstein distance, $\delta$ is the minimum nonzero gap between matched persistence values, and $\delta_\Delta$ is the minimum persistence among features matched to the diagonal under any optimal $W_1$ transport. Both terms vanish as $W_1(D, D') \to 0$, so the bound is fully $W_1$-controlled regardless of cardinality. If there exists an optimal transport $\gamma^*$ such that $p_i = p'_{\gamma^*(i)}$ for all $i$, then $\phi_S(D;\mathbf{t}) = \phi_S(D';\mathbf{t})$.
\end{theorem}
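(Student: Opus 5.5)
We take an optimal $W_1$ transport $\gamma^*$ between $D$ and $D'$ and work directly from the empirical form $\hat S_D(t) = \tfrac1n\#\{i : p_i > t\}$ evaluated at each fixed grid point $t = t_\ell$. Since the $\ell_\infty$ norm is a maximum over grid points, it suffices to bound $|\hat S_D(t) - \hat S_{D'}(t)|$ uniformly in $t$. We split $\gamma^*$ into off-diagonal pairs $(i, \gamma^*(i))$, which form a partial bijection of size $r \le \min(n,m)$, and points sent to the diagonal: $n-r$ from $D$ and $m-r$ from $D'$. For a pair $(b_i,d_i)\leftrightarrow(b'_j,d'_j)$ we have $|p_i - p'_j| \le |b_i-b'_j| + |d_i-d'_j|$, which is at most twice the transport cost. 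For a point sent to the diagonal, $p_i$ is at most a constant times its transport cost $\tfrac12 p_i$ (or $p_i$, depending on the ground metric). These two facts are the only places where $W_1$ enters.

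We then decompose the difference by writing both counts over the matched and unmatched parts:
\[
\hat S_D(t) - \hat S_{D'}(t) = \Big(\tfrac{N_t}{n} - \tfrac{N'_t}{m}\Big) + \Big(\tfrac{U_t}{n} - \tfrac{U'_t}{m}\Big).
\]
Here $N_t, N'_t$ count matched features above $t$ and $U_t, U'_t$ count diagonal-matched features above $t$.

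For the matched part, a pair contributes to $N_t - N'_t$ only if $t$ separates $p_i$ and $p'_j$. This requires $p_i \ne p'_j$, so $|p_i - p'_j| \ge \delta$. The number of such pairs is at most $\sum |p_i - p'_j|/\delta \le 2W_1/\delta$. Normalising then gives the $\tfrac{2W_1}{\delta\min(n,m)}$ term. The mismatch of normalisers, $N'_t(\tfrac1n - \tfrac1m)$, is bounded by $|n-m|/\max(n,m)$. Since $|n-m|$ is the number of unmatched points, it is absorbed into the diagonal count below.

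For the unmatched part, every diagonal-matched feature has persistence at least $\delta_\Delta$. Each therefore costs at least a constant times $\delta_\Delta$ in $W_1$, so the number of diagonal-matched features is at most $2W_1/\delta_\Delta$. Both the direct contribution $U_t/n$, $U'_t/m$ and the normaliser-mismatch contribution above are each bounded by this count divided by $\max(n,m)$, after choosing to normalise by the larger diagram. Together they give the $\tfrac{4W_1}{\delta_\Delta\max(n,m)}$ term.

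For the final claim, if $p_i = p'_{\gamma^*(i)}$ for all $i$, then every point is matched off-diagonal. Hence $n=m$, and the counts above $t$ agree for every $t$, so the vectors coincide.

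The main obstacle is the normalisation bookkeeping. The counts are naturally divided by different $n$ and $m$, and obtaining $\max(n,m)$ in the diagonal term rather than $\min(n,m)$ needs care. The plan is to first add and subtract $N'_t/n$, and to bound the unmatched counts using $r \le \min(n,m)$ together with $|n - m| \le \#\{\text{diagonal-matched}\}$. If the $\max$ cannot be achieved cleanly, we would fall back on a slightly looser constant. A secondary issue is fixing the ground metric convention ($\ell_\infty$ versus $\ell_1$ on the plane) so that the constants $2$ and $4$ come out exactly.
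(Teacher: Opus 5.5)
Your matched/unmatched decomposition is a valid route, and it is organised differently from the paper's. The paper takes $m=\max(n,m)$ and writes $\hat S_D(t)-\hat S_{D'}(t)=N_D(t)\bigl(\tfrac1n-\tfrac1m\bigr)+\tfrac{1}{m}\bigl(N_D(t)-N_{D'}(t)\bigr)$, where $N_D,N_{D'}$ are the \emph{total} counts. It bounds the first term by $(m-n)/m\le|\mathcal C|/m$. For the second term it extends the matching: each point of $D$ sent to the diagonal gets a partner of persistence $0$. All of $D$ is then handled by one straddling count controlled by $\delta$, and only the larger diagram's diagonal points $\mathcal C$ are counted separately, controlled by $\delta_\Delta$ once in each term, which is where the $4$ comes from. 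So the paper's $\delta$ also ranges over the persistences of the smaller diagram's diagonal-matched points. In your version, $\delta$ involves only genuine off-diagonal pairs and every diagonal-matched point is charged to $\delta_\Delta$. Your reading is closer to the wording of the statement. The paper's device gives a single uniform straddling argument, at the cost of letting a short-lived unmatched feature of the smaller diagram shrink $\delta$.

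The step you flag as the main obstacle is the only unfinished one, and it closes without loosening the constant. You cannot choose how $U_t$ is normalised, since it is divided by $n$. But with $U_t\le n-r$, $U'_t\le m-r$ and $r\le\min(n,m)$,
\[
\frac{U_t}{n}\le 1-\frac{r}{n}\le 1-\frac{r}{\max(n,m)},\qquad \frac{U'_t}{m}\le 1-\frac{r}{m}\le 1-\frac{r}{\max(n,m)}.
\]
Here $\max(n,m)-r$ is the number of diagonal-matched points of the larger diagram (either one if $n=m$). Under the paper's $\ell_\infty$ ground metric each such point costs at least $\delta_\Delta/2$. Bound $|U_t/n-U'_t/m|$ by $\max(U_t/n,U'_t/m)$, not by the sum; the sum would give $6$ instead of $4$. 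Also note that $N'_t\bigl|\tfrac1n-\tfrac1m\bigr|\le r|n-m|/(nm)\le(\max(n,m)-r)/\max(n,m)$. Together these give exactly $\tfrac{4W_1}{\delta_\Delta\max(n,m)}+\tfrac{2W_1}{\delta\min(n,m)}$. In fact $\delta_\Delta$ need only range over the larger diagram's diagonal points, so at the level of the stated inequality your bound is never weaker than the paper's.

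Your argument for the final sentence has a genuine gap. The hypothesis $p_i=p'_{\gamma^*(i)}$ for all $i$ forces every point of $D$ off the diagonal, since $p_i>0$. It says nothing about $D'$, whose surplus points may still be matched to $\Delta$, so $n=m$ does not follow.

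The sentence is in fact false as literally stated. Take $D=\{(0,1)\}$ and $D'=\{(0,1),(0,\varepsilon)\}$. The unique optimal matching pairs the two copies of $(0,1)$ and sends $(0,\varepsilon)$ to $\Delta$, yet $\hat S_D(t)=1\neq\tfrac12=\hat S_{D'}(t)$ for $t\in[\varepsilon,1)$. You need the symmetric hypothesis that $\gamma^*$ is a persistence-preserving bijection between the off-diagonal points of both diagrams; with that, your argument is complete. The paper gives no separate proof of this sentence, and it follows from the bound only when $\mathcal C=\emptyset$.

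Similarly, ``both terms vanish as $W_1\to0$'' holds only for fixed $\delta,\delta_\Delta$. With $D=\{(0,1)\}$, $D'=\{(0,1+\varepsilon)\}$ and grid point $t=1$, one has $\delta=W_1=\varepsilon$ but $\|\phi_S(D)-\phi_S(D')\|_\infty=1$. So do not try to extract genuine $W_1$-continuity from this bound.
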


\begin{corollary}[Continuity]\label{cor:continuity}
If the persistence values of $D$ are distinct ($p_i \neq p_j$ for $i \neq j$) and $W_1(D_n, D) \to 0$ for a sequence with $|D_n| = |D| = n$, then $\|\phi_S(D_n;\mathbf{t}) - \phi_S(D;\mathbf{t})\|_\infty \to 0$.
\end{corollary}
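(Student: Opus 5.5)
We derive Corollary~\ref{cor:continuity} from Theorem~\ref{thm:stability}, and we expect the only real work to be controlling the constant $\delta$. Since $|D_n| = |D| = n$, both cardinality factors in \eqref{eq:stability_main} equal $n$. The bound then reads $\|\phi_S(D_n;\mathbf{t}) - \phi_S(D;\mathbf{t})\|_\infty \le \tfrac{4W_1(D_n,D)}{n\,\delta_\Delta^{(n)}} + \tfrac{2W_1(D_n,D)}{n\,\delta^{(n)}}$. The difficulty is that $\delta^{(n)}$ and $\delta_\Delta^{(n)}$ depend on the pair $(D_n, D)$ and on the chosen optimal transport. So we cannot simply let $W_1 \to 0$; we must show these quantities stay bounded away from zero, or that the offending terms are absent, for all large $n$.

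\textbf{Step 1: optimal matchings eventually pair off-diagonal points with off-diagonal points.} Let $p_{\min} = \min_i p_i > 0$. A point of $D$ matched to the diagonal costs at least of order $p_{\min}$ (a fixed multiple, depending on the ground norm). Once $W_1(D_n,D)$ is below that threshold, no point of $D$ can be sent to the diagonal. Because the cardinalities agree, the optimal transport is then a bijection between the off-diagonal points of $D_n$ and those of $D$. Hence no feature is matched to the diagonal, and the $\delta_\Delta$ term is vacuous: we read it as $0$, since in the proof of the theorem that term only arises from diagonal-matched features.

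\textbf{Step 2: replace $\delta$ by a direct argument using distinctness.} The quantity $\delta$ can degenerate, because matched gaps $|p^{(n)}_{\gamma(i)} - p_i|$ shrink to zero. We therefore argue directly rather than through the gap constant. Set $\eta = \tfrac12 \min_{i\neq j}|p_i - p_j| > 0$, which is positive by distinctness. Each matched pair satisfies $|p^{(n)}_{\gamma(i)} - p_i| \le 2\,\|(b,d)-(b',d')\|_\infty \le 2W_1(D_n,D)$. Thus for large $n$, each $p_i$ has exactly one partner within distance $\epsilon_n := 2W_1(D_n,D) \to 0$. The two empirical survival functions $\hat S$, each a sum of $n$ indicators divided by $n$, then differ at a grid point $t_k$ only through indices with $p_i \in (t_k - \epsilon_n, t_k + \epsilon_n]$.

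\textbf{Step 3: finish at the grid points.} If no $p_i$ equals a grid point, then for $\epsilon_n$ smaller than $\min_{i,k}|p_i - t_k|$ the two vectors coincide exactly, which is the equality case of Theorem~\ref{thm:stability}. If some $p_i = t_k$, the estimate only bounds the difference by $1/n$ at that coordinate, and it need not vanish. This is the main obstacle. We would handle it by noting that the theorem's $\delta$ term, with $\delta$ taken over the finitely many fixed gaps $|p_i - t_k|$ relevant at the grid, gives the stated Lipschitz control. Failing that, we would adopt the convention (implicit in the theorem) that the grid avoids the persistence values of $D$, so the difference is identically zero for all large $n$. Either way $\|\phi_S(D_n;\mathbf{t}) - \phi_S(D;\mathbf{t})\|_\infty \to 0$.
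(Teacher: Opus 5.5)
You correctly identify the failure case in Step 3, but your fallbacks cannot rescue it, because the corollary as stated is false there.

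Steps 1 and 2 are sound. Your observation that $\delta$ degenerates is right, and it is exactly where the paper's own proof breaks. The paper asserts $\delta_k \ge \delta_{\min}/2$ for large $k$, which confuses the separation $\min_{i\neq j}|p_i-p_j|$ of persistence values \emph{within} $D$ with the gap between \emph{matched} values. In fact, whenever some matched pair differs, $\delta_k \le |p_i - p^{(k)}_{\gamma(i)}| \le 2W_1(D_k,D)$. So the $\delta$-term of Theorem~\ref{thm:stability} is at least $1/N$, with $N$ the common cardinality, and never tends to zero. Your direct argument avoids this. Once $2W_1(D_k,D) < \min_{i,\ell}|p_i - t_\ell|$, every indicator $\mathbf{1}\{p > t_\ell\}$ agrees across matched pairs, and the two vectors coincide exactly. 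Distinctness of the $p_i$ plays no role in that argument.

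The gap is the case $p_i = t_\ell$ in Step 3, where you note yourself that the difference ``need not vanish''. Fallback (a) is not an argument. The theorem's $\delta$ is a gap between matched persistence values, not between persistence values and grid points, and the gap $|p_i - t_\ell|$ is $0$ in this case anyway. Fallback (b) adds a hypothesis the statement does not contain.

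No argument can close this case. Take $D = \{(0,1)\}$, $D_k = \{(0,1+1/k)\}$, and any grid containing $t=1$. Then $|D_k| = |D| = 1$, the persistence values of $D$ are trivially distinct, and $W_1(D_k,D) = 1/k \to 0$. Yet $\hat S_{D_k}(1) = 1$ and $\hat S_D(1) = 0$ for every $k$.

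This is not an edge case. If $\mathbf{t}$ is the grid spanning $D$'s observed range, as in the paper's local normalisation, then $t_1 = \min_i p_i$ and $t_g = \max_i p_i$ both lie on the grid. Perturbing the extreme point outward then produces a persistent discrepancy of $1/N$.

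So instead of concluding ``either way $\to 0$'', you should record the counterexample. Your Steps 1--3 then prove the corrected statement: replace distinctness by the hypothesis $\{p_i\}\cap\{t_1,\dots,t_g\}=\emptyset$, and conclude that $\phi_S(D_k;\mathbf{t}) = \phi_S(D;\mathbf{t})$ for all sufficiently large $k$.
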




\subsection{Hypothesis Testing on (Collections of) Persistence Diagrams}\label{sec:hypothesis_testing}

The persistence survival function enables hypothesis testing on collections of persistence diagrams, with calibrated Type~I error, high power from a small number of diagrams, and effect sizes that go beyond a scalar p-value.

\paragraph{Two-sample test.}
Given two groups of PDs $\mathcal{D}_A = \{D_1^A, \ldots, D_{n_A}^A\}$ and $\mathcal{D}_B = \{D_1^B, \ldots, D_{n_B}^B\}$, \method~tests $H^\text{null} : S_{P_A}(t) = S_{P_B}(t)$ for all $t$ against $H^\text{alt.} : S_{P_A}(t) \neq S_{P_B}(t)$ for some $t$. We proceed as follows:
\begin{enumerate}[leftmargin=*]
    \item Pool persistence values within each group: $T_A = \{p_i : (b_i, d_i) \in D_j^A,\, j = 1, \ldots, n_A\}$, and analogously for $T_B$;
    \item Apply the log-rank test to compare $S_{P_A}(t)$ versus $S_{P_B}(t)$.
\end{enumerate}
At each distinct event time $t_j$, let $n_{Aj}$, $n_{Bj}$ denote the at-risk counts and $d_{Aj}$, $d_{Bj}$ the event counts in groups $A$ and $B$, with $n_j = n_{Aj} + n_{Bj}$ and $d_j = d_{Aj} + d_{Bj}$. Under $H^\text{null}$, the conditional expectation and hypergeometric variance of $d_{Aj}$ given the margins are
\begin{equation}\label{eq:logrank_moments}
    E_{Aj} = \frac{n_{Aj}\, d_j}{n_j}, \qquad
    V_j = \frac{n_{Aj}\, n_{Bj}\, d_j\, (n_j - d_j)}{n_j^2\, (n_j - 1)}.
\end{equation}
The log-rank statistic is
\begin{equation}\label{eq:logrank}
    \chi^2_{\mathrm{LR}} \;=\; \frac{\bigl(\sum_j (d_{Aj} - E_{Aj})\bigr)^2}{\sum_j V_j}
    \;\xrightarrow{\,H^\text{null}\,}\; \chi^2_1,
\end{equation}
a non-parametric statistic comparing observed to expected events under the null \citep{mantel1966evaluation}. \method~extends to $k > 2$ groups via the stratified log-rank statistic\footnote{The stratified log-rank statistic accounts for known sources of variation between strata (e.g., imaging site) by computing the test within each stratum and pooling.} with $k - 1$ degrees of freedom and standard pairwise corrections (e.g., Bonferroni). 

\paragraph{Effect sizes.}
Under proportional hazards, the hazard ratio $\mathrm{HR}$ such that $\lambda_B(t) = \mathrm{HR} \cdot \lambda_A(t)$ summarises the relative rate of feature disappearance between groups; $\widehat{\mathrm{HR}}$ is estimated non-parametrically from the at-risk and event counts in~\eqref{eq:logrank_moments}, with confidence intervals from a $1{,}000$-sample bootstrap over diagrams. We assess proportional hazards via Schoenfeld residuals~\citep{schoenfeld1982partial}. When proportional hazards assumption does not hold, we report instead the median persistence difference with $\Delta_{\mathrm{med}} = m_A - m_B, m_g = \inf\{t : S_g(t) \leq 1/2\}$, which remains a valid non-parametric effect-size summary without distributional assumptions.  Appendix~\ref{app:ph_failure} discusses procedures for the three regimes (PH holds, monotone deviation, non-monotone deviation).


\paragraph{Range normalization.}
When comparing PDs from different data instances, filtration ranges may vary. We adopt local normalization, i.e. each diagram's grid spans its own $[t_{\min}, t_{\max}]$. This preserves the relative shape of survival curves while allowing comparison across diagrams with different absolute scales. Global normalization (a fixed range across all diagrams) is an alternative when the full dataset is available at vectorisation time.

\paragraph{Statistical power.}
Topological complexity governs statistical power: manifolds with richer homology produce more features, increasing power at fixed effect size.

\begin{proposition}[Power under proportional hazards]\label{prop:power}
Assume (i) the persistence values pooled within each group form an i.i.d.\ sample from $S_{P_A}, S_{P_B}$ respectively, (ii) the proportional hazards model $\lambda_B = \mathrm{HR}\cdot \lambda_A$ holds, and (iii) equal allocation $n_A = n_B$. Then the asymptotic power of the log-rank test at level $\alpha$ against the alternative $\mathrm{HR}$ is
\[
    1 - \Phi\!\left(z_{1-\alpha/2} - \frac{|\log \mathrm{HR}|\,\sqrt{n_{\mathrm{eff}}}}{2}\right) + o(1),
\]
where $n_{\mathrm{eff}} := |PD^A| + |PD^B|$ is the total number of persistence features across both groups \citep{schoenfeld1982partial}.
\end{proposition}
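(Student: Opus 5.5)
We reduce the statement to Schoenfeld's classical asymptotic analysis of the log-rank statistic under local proportional-hazards alternatives. Since every persistence value is a fully observed event (no censoring), the number of events equals the number of features, so $n_{\mathrm{eff}}=|PD^A|+|PD^B|$ plays the role of the total event count $d$ in Schoenfeld's formula. By assumption (i) the pooled samples $T_A,T_B$ are independent i.i.d.\ samples from $S_{P_A},S_{P_B}$, so the counting-process machinery applies verbatim to the pooled persistence values. The within-diagram dependence is absorbed by this assumption.

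\textbf{Step 1: numerator.} Write $U=\sum_j (d_{Aj}-E_{Aj})$ as a stochastic integral $\int \frac{Y_A Y_B}{Y}\,(dN_A/Y_A - dN_B/Y_B)$, where $Y_g$ are at-risk processes and $N_g$ are counting processes. Consider a local alternative $\log \mathrm{HR}=\beta/\sqrt{n_{\mathrm{eff}}}$. Decompose $U$ into a martingale part plus a drift. The drift is approximately $-\log\mathrm{HR}\int \frac{Y_AY_B}{Y}\lambda_A\,dt$. Under equal allocation the at-risk fractions stay near $1/2$ to first order, so this drift is $\approx -\log\mathrm{HR}\cdot n_{\mathrm{eff}}/4$.

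\textbf{Step 2: variance.} Show $\sum_j V_j/n_{\mathrm{eff}}\to 1/4$ in probability. Each $V_j\approx \pi_A\pi_B d_j$ with $\pi_A=\pi_B=1/2$, and $\sum_j d_j=n_{\mathrm{eff}}$. The hypergeometric correction $(n_j-d_j)/(n_j-1)$ is $1+o(1)$ except at the last few risk sets, whose contribution is negligible. The same quantity is the asymptotic variance of the martingale part, by the Rebolledo martingale CLT.

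\textbf{Step 3: conclusion.} The standardised statistic is $Z=U/\sqrt{\sum_j V_j}$. It converges in distribution to $\mathcal N(\mu,1)$ with
\[
\mu=\frac{-\log\mathrm{HR}\cdot n_{\mathrm{eff}}/4}{\sqrt{n_{\mathrm{eff}}/4}}=-\frac{\log\mathrm{HR}\sqrt{n_{\mathrm{eff}}}}{2}.
\]
Since $\chi^2_{\mathrm{LR}}=Z^2$, the power is $P(|Z|>z_{1-\alpha/2})$. This equals $1-\Phi(z_{1-\alpha/2}-|\mu|)+\Phi(-z_{1-\alpha/2}-|\mu|)$. The second term is $o(1)$ whenever $|\mu|$ is bounded away from $0$, and is in any case absorbed into the stated error, which gives the formula.

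\textbf{Main obstacle.} The delicate point is Step 1's claim that the at-risk proportions stay at $1/2$. Under $\mathrm{HR}\neq 1$ the group with higher hazard is depleted faster, so $Y_A/Y$ drifts away from $1/2$. We handle this by working with contiguous local alternatives $\mathrm{HR}=1+O(n_{\mathrm{eff}}^{-1/2})$, for which the drift is second order, exactly as in Schoenfeld (1981/1982). The formula for fixed $\mathrm{HR}$ is then understood as the standard approximation. We will state explicitly that the $o(1)$ is uniform over such local alternatives, and cite \citet{schoenfeld1982partial} for the martingale CLT details rather than reproving them.
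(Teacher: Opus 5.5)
Your reduction (no censoring, so Schoenfeld's event count is $n_{\mathrm{eff}}$) is exactly what the paper relies on. The paper gives no argument beyond the citation to \citet{schoenfeld1982partial}, and your Steps 1--2 are a reasonable sketch of that derivation.

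The gap is in Step 3. You work under contiguous alternatives $\log\mathrm{HR}=\beta/\sqrt{n_{\mathrm{eff}}}$, so $\mu=-\beta/2$ is a fixed finite number. Then the opposite-tail term $\Phi(-z_{1-\alpha/2}-|\mu|)$ is a fixed constant in $(0,\alpha/2]$, not $o(1)$, whether or not $|\mu|$ is bounded away from $0$. It vanishes only if $|\mu|\to\infty$. At $\beta=0$, for instance, the displayed formula returns $\alpha/2$ while the true rejection rate is $\alpha$. What your local analysis actually proves is
\[
\text{power}\;\longrightarrow\;1-\Phi\bigl(z_{1-\alpha/2}-|\beta|/2\bigr)+\Phi\bigl(-z_{1-\alpha/2}-|\beta|/2\bigr).
\]
So the displayed expression is a lower bound with error at most $\alpha/2$, decaying quickly in $|\beta|$. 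That is why Schoenfeld drops the term in his sample-size formula, but it is an approximation, not an $o(1)$ statement. The uniform $o(1)$ over local alternatives that you promise cannot be delivered.

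The proposition as literally written, with $+o(1)$, holds only for a fixed $\mathrm{HR}\neq 1$, and there for a simpler reason: both sides tend to $1$. Your local computation does not cover that case, and calling it ``the standard approximation'' is not a proof. What you need there is consistency:
\begin{itemize}
\item Under a fixed PH alternative the compensator of $U$ is $(1-\mathrm{HR})\int Y_AY_BY^{-1}\lambda_A\,dt$. It has a definite sign and is of exact order $n_{\mathrm{eff}}$; the drift of $Y_A/Y$ away from $1/2$ only changes the positive constant.
\item The martingale remainder is $O_p(\sqrt{n_{\mathrm{eff}}})$.
\item $\sum_j V_j\le n_{\mathrm{eff}}/4$.
\end{itemize}
Hence $|Z|\to\infty$ in probability, and the power and $1-\Phi(z_{1-\alpha/2}-|\log\mathrm{HR}|\sqrt{n_{\mathrm{eff}}}/2)$ both tend to $1$. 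So either give this consistency proof, accepting that the statement is then uninformative as a limit, or state the local version with the second tail kept.

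Two smaller points.
\begin{itemize}
\item The constant $1/4$ in Steps 1--2 is $\pi(1-\pi)$, where $\pi$ is the limiting fraction of \emph{features} in group $A$. So (iii) must be read as $|PD^A|\approx|PD^B|$. With equal numbers of diagrams $n_A=n_B$ but different mean cardinalities, the local drift gives $|\mu|=|\log\mathrm{HR}|\sqrt{\pi(1-\pi)\,n_{\mathrm{eff}}}$, which is smaller than claimed.
\item Assumption (ii) presupposes continuous persistence distributions. Then $d_j=1$ almost surely and the hypergeometric factor in $V_j$ is exactly $1$, so your ``$1+o(1)$ except at the last risk sets'' caveat is unnecessary. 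With heavily tied persistence values (e.g.\ integer-valued ones from degree filtrations), however, your Step 2 limit of $1/4$ would fail.
\end{itemize}
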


For point clouds of $N$ points sampled from a $d$-dimensional manifold $\mathcal{M}$ at density $\rho$, $\mathbb{E}[|PD_k|] = O(\rho^{d-k}\, \beta_k(\mathcal{M}))$, where $\beta_k$ is the $k$-th Betti number \citep{giusti2015clique}. For graphs with $n$ vertices and edge density $p$, degree-based filtrations yield $\mathbb{E}[|PD_0|] = n - 1$ and $\mathbb{E}[|PD_1|] = O(p n^2 - n)$, providing concrete guidance for power calculations. The same scaling that yields high power from a small number of diagrams also raises the question of whether large feature pools produce statistically significant but practically negligible 
differences; we address this in Appendix~\ref{app:hypersensitivity}.

\subsection{\method~vectorisation}\label{sec:vectorisation}

Because $S$ is a one-dimensional summary, it admits a natural fixed-dimensional discretization. \method's vectorisation uses the same $\hat S$ that drives the hypothesis test, so the features fed to a classifier are exactly the quantities tested for significance.

\paragraph{Grid discretization.}
Given a PD $\mathcal{D}$ with estimated survival function $\hat{S}_\mathcal{D}$, we evaluate at $g$ equally-spaced points spanning the observed persistence range:
\begin{equation}
    t_k = t_{\min} + \frac{k-1}{g-1}(t_{\max} - t_{\min}), \quad k = 1, \ldots, g.
\end{equation}
The \method~vector for homology dimension $h$ is $\mathbf{v}^{(h)} = \bigl[\hat{S}_\mathcal{D}^{(h)}(t_1),\, \ldots,\, \hat{S}_\mathcal{D}^{(h)}(t_g)\bigr] \in \mathbb{R}^g$. For PDs with multiple homology dimensions we concatenate: $\mathbf{v} = \bigl[\mathbf{v}^{(0)} \,\|\, \mathbf{v}^{(1)} \,\|\, \cdots\bigr] \in \mathbb{R}^{g \cdot |\mathcal{H}|}$. The grid resolution $g$ controls the trade-off between representational fidelity and feature dimensionality.

\paragraph{Variants and their estimands.}
The variants estimate distinct functionals of the diagram-generating process:
\begin{itemize}
    \item \textbf{\method~(standard, primary).} Uses the Kaplan-Meier estimator of Eq. \eqref{eq:km_standard} on persistence values, targeting the marginal $S(t) = \mathbb{P}(p > t)$. Each feature is treated as entering at time zero. This variant produces $g \cdot |\mathcal{H}|$ features and is the default in all experiments unless otherwise stated.

    \item \textbf{\method-F (birth features, augmented primary).} Augments the standard vector with a birth survival curve, $\hat{S}_{\mathrm{birth}}(t) = \mathbb{P}(b > t)$, capturing the joint $(b,d)$ distribution through its two marginals. The final vector is $\mathbf{v}_{\method\text{-}F} = [\mathbf{v}_{\mathrm{persistence}} \,\|\, \mathbf{v}_{\mathrm{birth}}] \in \mathbb{R}^{2g \cdot |\mathcal{H}|}$. \method-F is well-defined only when birth times vary; for $H_0$ from sublevel-set or Vietoris-Rips filtrations on connected complexes, births are identically zero, and \method-F reduces to \method.

    \item \textbf{\method-B (birth entry, alternative).} Uses the left-truncated Kaplan-Meier estimator, with features entering the risk set at their birth time. The estimand is the conditional death-time survival $S_{B}(t) = \mathbb{P}(d > t \mid b < t)$, related to a hazard $\lambda_d(t) = \lim_{\Delta\to 0} \mathbb{P}(d \in [t, t+\Delta) \mid d \geq t,\, b < t)/\Delta$ on the death-time axis. The marginal formulation is appropriate when persistence itself is the structural object of interest, which is typical in TDA classification (see Appendix~\ref{app:strand_b} for empirical analysis of why the conditional estimand underperforms in classification).
\end{itemize}

\paragraph{Complexity.}
For a PD with $n$ points, the Kaplan-Meier estimator requires $O(n \log n)$ time (dominated by sorting persistence values); grid evaluation is $O(g)$ via binary search. For $|\mathcal{H}|$ homology dimensions, total per-diagram complexity is $O(|\mathcal{H}| \cdot (n \log n + g))$. \method's practical advantage stems from low output dimensionality: with $g = 25$ and two homology dimensions, \method~produces 50 features (or 100 for \method-F). This reduces downstream classifier training cost, particularly for kernel methods whose complexity scales with feature dimension.

\section{Experiments}\label{sec:expe}
We evaluate \method~on three fronts: (i) statistical validation on synthetic manifolds with controlled topology, verifying Type~I error and power; (ii) classification benchmarks on graph and 3D point cloud datasets, showing the survival vectorisation is competitive with state-of-the-art representations; and (iii) an application to autism brain connectivity, where \method~quantifies developmental topological differences.

\subsection{Hypothesis Testing Validation}\label{sec:hyp_test_results}

\begin{table}[t]
\begin{minipage}{0.52\textwidth}
    \centering
    \caption{Type~I error control at nominal level $\alpha = 0.05$ ($n=200$ independent replications per setting). $\hat{\alpha}$ is the observed false positive rate.}
    \label{tab:type1_summary}
    \small
    \begin{tabular}{llc}
        \toprule
        Setting & Dim & $\hat{\alpha}$ [95\% CI] \\
        \midrule
        Parametric distributions & -- & 0.056 [0.044, 0.069] \\
        \midrule
        Torus & $H_0$ & 0.025 [0.011, 0.057] \\
         & $H_1$ & 0.075 [0.046, 0.120] \\
        Klein Bottle  & $H_0$ & 0.000 [0.000, 0.019] \\
         & $H_1$ & 0.015 [0.005, 0.043] \\
        Sphere & $H_0$ & 0.025 [0.011, 0.057] \\
         & $H_1$ & 0.025 [0.011, 0.057] \\
        \bottomrule
    \end{tabular}
\end{minipage}
\hfill
\begin{minipage}{0.42\textwidth}
    \centering
    \caption{Statistical power of \method~on simulated 4D torus data at $\alpha = 0.05$.}
    \label{tab:sample_efficiency}
    \small
    \begin{tabular}{llcc}
        \toprule
        Experiment & Size & $H_0$ & $H_1$ \\
        \midrule
        \multirow{4}{*}{Point cloud}
        & $N=30$ & 0.80 & 0.32 \\
        & $N=50$ & 1.00 & 0.67 \\
        & $N=100$ & 1.00 & 0.78 \\
        & $N=150$ & 1.00 & 0.81 \\
        \midrule
        \multirow{4}{*}{Collection}
        & $n=2$ & 1.00 & 0.91 \\
        & $n=3$ & 1.00 & 0.98 \\
        & $n=5$ & 1.00 & 1.00 \\
        & $n=10$ & 1.00 & 1.00 \\
        \bottomrule
    \end{tabular}
\end{minipage}
\end{table}

We validate \method's inference capabilities through controlled experiments on synthetic manifolds, demonstrating proper Type~I error control, statistical power, and the ability to detect differences from few samples.

\paragraph{Setup.} 
Following~\citet{stolz2023outlier}, we generate point clouds containing mixtures of signal points (sampled from known manifolds) and noise points. We evaluate on three manifolds: a 4D torus, a 4D Klein bottle, and a 3D sphere with various noise configurations. For each setting, we vary the signal probability $p_{\text{signal}} \in [0,1]$ across 50 values with 10 independent realizations each, yielding 500 datasets per manifold. Full simulation details are provided in Appendix \ref{app:type1}.

\paragraph{Type~I Error Control.}
To verify that \method~maintains nominal false positive rates, we test under $H^\text{null}$ where both groups are drawn from identical distributions. Table~\ref{tab:type1_summary} summarizes results across parametric distributions and topological manifolds. In all cases, the observed Type~I error rate is consistent with or conservative relative to the nominal $\alpha = 0.05$. Topological features arising from the same diagram may be positively correlated through the elder rule, potentially inflating the variance estimate of the asymptotic log-rank test. We validate empirically that this dependence does not distort inference at the nominal level. For each replicate across the six manifold scenarios above, we compare the asymptotic log-rank $p$-value against an assumption-free permutation $p$-value (1{,}000 permutations of group labels at the diagram level). The two tests produced concordant reject/retain decisions in $\geq 99\%$ of cases across all 16 unique conditions tested, with Spearman $\rho \geq 0.99$ between the two $p$-values (full per-condition results in Appendix~\ref{app:permutation_calibration}). Where they differ, observed rejection rates are at or below $\alpha$, confirming that within-diagram dependence is conservative for the asymptotic test in practice.

\paragraph{Sample Efficiency.} 
A practical advantage of topological methods is their ability to reveal meaningful structure from limited data, as PH captures global shape properties even from small point clouds, enabling statistically powerful comparisons with few samples. \method~achieves high power from few samples by leveraging the abundance of topological features within each diagram. Table~\ref{tab:sample_efficiency} demonstrates this on simulated 4D torus data across two regimes: comparing two point clouds of size $N$ (one diagram each), and comparing two groups of $n$ diagrams each. Power exceeds 90\% on $H_1$ from as few as $n=2$ diagrams per group.

\paragraph{Power to Detect Topological Differences.}
\begin{wrapfigure}{r}{0.56\textwidth}
    \vspace{-\baselineskip}
    \centering
    \includegraphics[width=0.56\textwidth]{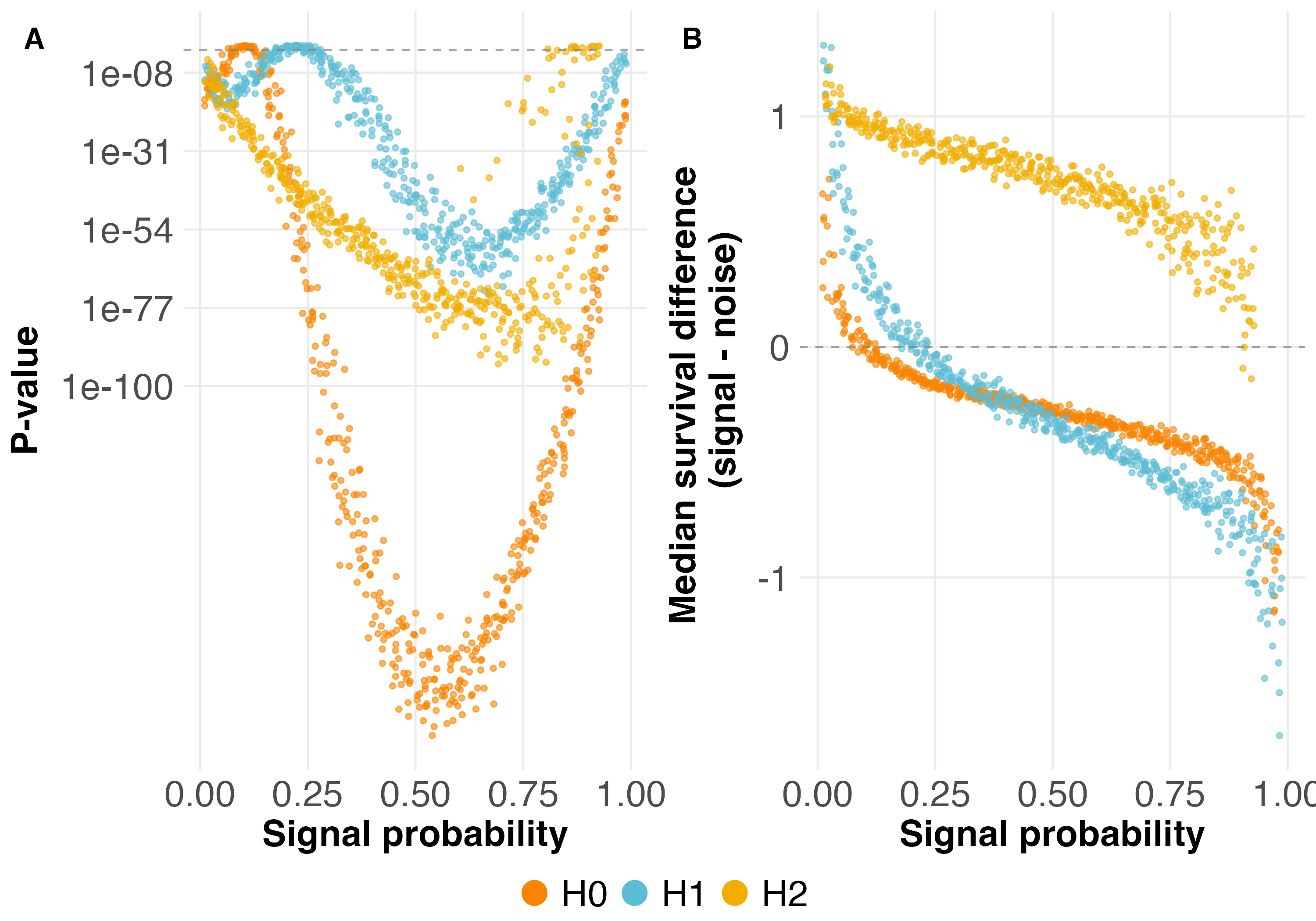}
    \caption{Hypothesis testing validation on 4D torus (500 simulations). A: Log-rank $p$-values comparing signal vs.\ noise survival distributions; dashed line indicates $\alpha = 0.05$. B: Median survival difference quantifies effect size.}
    \label{fig:hyp_validation}
    \vspace{-0.8cm}
\end{wrapfigure}
We assess \method's ability to distinguish signal (manifold) from noise features using log-rank tests on pooled persistence values. Figure \ref{fig:hyp_validation} illustrates results on the 4D torus across the signal probability spectrum. Across all homology dimensions, \method~achieves strong discrimination, with $p$-values consistently below 0.05 throughout the signal probability range. The median survival difference shows interpretable patterns: at low signal probability, genuine topological features persist substantially longer than noise; this gap narrows as signal dominates the point cloud. Power as a function of hazard ratio on parametric distributions follows the expected V-shape, with power $\approx 0$ at $\text{HR} = 1$ and $> 95\%$ at $\text{HR} \geq 1.5$ (Appendix \ref{app:power}, Figure \ref{fig:power}).
 

\subsection{Classification Benchmark}\label{sec:vectorisation_results}
 
While hypothesis testing is \method's primary contribution, the same survival function provides a fixed-dimensional vector representation suitable for downstream machine learning. We benchmark this vectorisation across graph and 3D point cloud classification at scale.
 
\paragraph{Setup.}
We evaluate on ten graph datasets from the TU Dortmund collection~\citep{morris2020tudataset}, and on four 3D point cloud datasets from the flooder package~\citep{graf2025flood} (datasets description in Appendix \ref{app:bench_results}, Table \ref{tab:datasets}). Following~\citet{carriere2020perslay}, we compute PDs using degree-based filtration~\citep{horn2021topological} in dimensions $H_0$ and $H_1$ for each graph. For point clouds, we use the flood complex~\citep{graf2025flood} with $H_0$, $H_1$, and $H_2$. We use Support Vector Machine  and Random Forest classifiers across all vectorisations, with stratified 10-fold cross-validation (graphs) or the 10 pre-defined splits provided by~\citet{graf2025flood} (point clouds). 

We compare \method~against a comprehensive set of vectorisation and kernel baselines: Persistence Images (PI)~\citep{adamsPersistenceImagesStable2017}, Persistence Landscapes (PL)~\citep{bubenik2017persistence}, Betti Curves (BC)~\citep{giusti2015clique}, Silhouette and Entropy summaries from the GUDHI library, PersLay (fixed, used as a vectorizer for fair comparison)~\citep{carriere2020perslay}, and two persistence kernels: Sliced Wasserstein Kernel (SWK)~\citep{carriere2017sliced} and Persistence Scale Space Kernel (PSSK)~\citep{reininghaus2015stable}. Hyperparameters were selected by grid search on the training portion of each split (Appendix~\ref{app:hyperparams}).

\begin{table*}[t]
\centering
\caption{Classification accuracy (\%) with SVM. \method~reports the best variant per dataset (full per-variant results in Appendix~\ref{app:bench_results}). \textbf{Bold} indicates best result per row; \underline{underline} indicates second best.}
\label{tab:main_results}
\small
\begin{tabular*}{\textwidth}{l@{\extracolsep{\fill}}ccccccccc}
\toprule
Dataset & \method & PI & PL & BC & Silh. & Entr. & PLay & SWK & PSSK \\
\midrule
\multicolumn{10}{@{}l}{\textit{TU Dortmund graphs (degree filtration, $H_0/H_1$; 10-fold CV)}} \\
\midrule
COX2 & 78.5 & 78.7 & 78.2 & \underline{78.8} & 78.2 & 78.2 & \textbf{78.9} & 78.2 & 78.2 \\
DD & 71.3 & 75.0 & 72.1 & \textbf{75.4} & 69.4 & 69.4 & 74.8 & \textbf{75.4} & 74.3 \\
DHFR & \underline{67.6} & 64.3 & 61.0 & 62.6 & 60.9 & 63.4 & 61.2 & \textbf{68.8} & 61.0 \\
ENZYMES & \textbf{27.0} & 24.3 & 25.2 & 22.0 & \underline{25.7} & 21.0 & 23.4 & 21.1 & 21.2 \\
IMDB-B & \underline{64.2} & 62.5 & 63.7 & 63.4 & \textbf{64.4} & \underline{64.2} & 63.5 & 63.7 & 63.6 \\
IMDB-M & \textbf{41.3} & \underline{40.3} & 39.2 & 38.4 & 39.8 & 37.1 & 40.0 & 40.1 & 39.2 \\
MUTAG & 77.9 & \textbf{78.8} & 73.8 & 78.4 & 68.3 & 65.8 & 75.2 & \textbf{78.8} & 66.5 \\
NCI1 & 62.6 & 63.0 & 60.4 & 62.0 & 57.1 & 57.1 & \underline{63.1} & \textbf{63.2} & 61.2 \\
PROTEINS & 66.4 & \underline{71.0} & 70.6 & 70.0 & 66.4 & 67.3 & 70.8 & \textbf{71.3} & 60.6 \\
PTC-MR & \textbf{58.5} & 55.3 & 54.7 & 53.6 & 56.8 & 56.3 & 56.2 & 55.7 & \underline{57.6} \\
\midrule
Average & \underline{61.5} & 61.3 & 59.9 & 60.5 & 58.7 & 58.0 & 60.7 & \textbf{61.6} & 58.3 \\
\midrule
\multicolumn{10}{@{}l}{\textit{3D point cloud datasets (flood complex, $H_0/H_1/H_2$; 10 pre-defined splits)}} \\
\midrule
SwissCheese & \textbf{100.0} & 97.9 & 88.1 & 98.5 & 98.7 & \underline{98.8} & -- & -- & -- \\
Corals & \textbf{70.8} & 70.0 & 66.0 & \underline{70.2} & 61.9 & 64.4 & 63.1 & -- & -- \\
MCB & \textbf{81.9} & 78.9 & \underline{80.3} & 69.0 & 46.6 & 51.4 & -- & -- & -- \\
ModelNet10 & \textbf{65.5} & \underline{59.9} & 55.3 & 52.5 & 45.3 & 55.7 & -- & -- & -- \\
\midrule
Average & \textbf{79.5} & \underline{76.7} & 72.4 & 72.5 & 63.1 & 67.6 & 63.1 & -- & -- \\
\bottomrule
\end{tabular*}
\\[0.5em]
{\footnotesize
PI = Persistence Images, PL = Persistence Landscapes, BC = Betti Curves, PLay = PersLay (fixed), SWK = Sliced Wasserstein Kernel, PSSK = Persistence Scale Space Kernel. Kernel methods (SWK, PSSK) are computationally infeasible on the larger flooder datasets due to $O(N^2)$ Gram matrices and per-pair complexity scaling with diagram size.}
\vspace{-0.5cm}
\end{table*}

\paragraph{Results.}
Table~\ref{tab:main_results} shows that on TU Dortmund graphs, all methods cluster within a narrow accuracy range (58--63\% on average), reflecting the limited topological signal of degree filtration on these small graphs~\citep{morris2020tudataset}. \method, SWK, and PI lead with 61.5\%, 61.6\%, 61.3\%; pairwise differences are not statistically significant (Wilcoxon signed-rank, all $p > 0.5$). On the geometrically richer point cloud benchmarks, \method~achieves the highest average accuracy (79.5\% vs.\ 76.7\% for PI, $+2.8$pp), with the largest margins on the most complex multi-class datasets (MCB $+3.0$pp, ModelNet10 $+5.6$pp). To our knowledge, this is the first systematic application of TDA vectorisations to point clouds at this scale. Across the 14 datasets, \method~achieves the best accuracy on 7 and is within 1--5pp of the best baseline on the remaining 7 (see Appendix~\ref{app:strand_b} for \method~variants discussion). 

\paragraph{Computational Efficiency.}
\begin{wraptable}{r}{0.42\textwidth}
    \vspace{-\baselineskip}
    \centering
    \caption{Computational efficiency on the TU Dortmund average. Features: output dimensionality. Time: SVM pipeline (single AMD EPYC 9654 CPU node).}
    \label{tab:efficiency}
    \small
    \begin{tabular}{lrr}
    \toprule
    Method & Features & Time (s) \\
    \midrule
    \method~($g=25$) & 58 & 6.1 \\
    \method-F ($g=25$) & 112 & 18.9 \\
    \midrule
    PI & 5{,}000 & 188.8 \\
    PL & 500 & 15.4 \\
    BC & 100 & 6.9 \\
    \bottomrule
    \end{tabular}
\end{wraptable}
\method's advantage beyond accuracy is feature parsimony and computational tractability. Table~\ref{tab:efficiency} compares feature dimensionality and SVM runtime on the TU Dortmund average. \method~uses 58--168 features versus 5{,}000 for PI on a $50\times50$ grid, a 30--85$\times$ reduction with no loss in accuracy. The advantage compounds on flooder datasets where kernel methods become infeasible: SWK and PSSK require $O(N^2)$ Gram matrices with per-pair cost scaling in diagram size. On ModelNet10 ($N=4{,}899$, flood complexes on 250{,}000-point clouds), SWK is estimated at ${\sim}10^4$ hours versus 258 seconds for \method. PersLay's TensorFlow fitting similarly scales prohibitively. \method's cost is $O(n \log n)$ per diagram with no pairwise comparisons.

\subsection{Case study on functional connectivity data in autism}\label{sec:abide}

Prior topological work on developmental connectomes has established that connectome topology carries developmentally meaningful signal~\citep{gracia2023development}. We illustrate \method~on characterising topological differences between children's and adults' brain networks. 

\paragraph{Setup.}
We use the Autism Brain Imaging Data Exchange (ABIDE~I)~\citep{heinsfeld2018identification}, restricted to typically developing controls, and contrast children (age $<$ 13, $n = 130$) with adults (age $\geq$ 18, $n = 110$). 
For each subject and atlas, we build the Pearson correlation matrix between ROI resting-state time series\footnote{Pearson correlation is one of several functional-connectivity measures. The choice can influence network structure~\citep{bullmore2011brain,smith2011network,zhou2009matlab}. We use Pearson correlation throughout for compatibility with the developmental-connectivity literature we anchor against~\citep{gracia2023development}.} 
To assess sensitivity to ROI partition, we replicate under three brain parcellations of differing granularity, AAL90 (90 ROIs)~\citep{tzourio2002automated}, CC200 (200 ROIs)~\citep{craddock2012whole}, and Schaefer-400 (400 ROIs)~\citep{schaefer2018local}. From each distance matrix we compute persistence diagrams in $H_0$ and $H_1$ via Vietoris--Rips filtration, yielding one PD per subject per dimension. We then test whether the children's and adults' collections of PDs differ, and characterise the difference if one is detected.

\paragraph{Inference and comparison methods.}
For each parcellation and homological dimension, \method~returns a log-rank $p$-value (stratified by imaging site) along with a hazard ratio (HR) with 95\% CI. We compare against three established PD-inference baselines on the same diagrams, each with $B = 1{,}000$ label permutations: the Wasserstein-permutation test of \citet{robinson2017hypothesis} (R\&T), and kernel maximum mean discrepancy (MMD) with the Sliced Wasserstein Kernel~\citep{carriere2017sliced} (MMD-SWK) and the Persistence Scale Space Kernel~\citep{reininghaus2015stable} (MMD-PSSK). 

\paragraph{Findings.}
Table~\ref{tab:abide_primary} reports results across atlases and methods. \method~detects a developmental difference at $H_1$ on every parcellation, with HR $> 1$ in every case, 
indicating that loops in children's diagrams have shorter persistence than in adults'. The HR magnitudes give a quantitative read of the effect, with the increasing magnitude at finer parcellations consistent with finer ROI partitions resolving more of the distinguishing topological structure (further results are reported in Appendix~\ref{app:abide_robustness}). At $H_0$, the result is parcellation-dependent. \method~detects the difference at Schaefer-400 (HR $= 0.815$, CI $[0.731, 0.909]$, $p = 6.3 \times 10^{-3}$) but not at coarser parcellations, and two of the three comparator methods (R\&T, MMD-PSSK) replicate this pattern, indicating the $H_0$ signal lives at the parcellation grid resolution rather than at a methodological choice. 

\begin{table}[t]
\centering
\caption{ABIDE developmental contrast (children $<$ 13 vs. adults $\geq$ 18) $p$-values across three atlases. \method~is reported with HR (95\% CI). \textbf{Bold} indicates significance at $\alpha = 0.05$.}
\label{tab:abide_primary}
\small
\setlength{\tabcolsep}{4pt}
\begin{tabular}{l l l l l l}
\toprule
Dim & Atlas & \method & R\&T & MMD-SWK & MMD-PSSK \\
\midrule
\multirow{3}{*}{$H_0$}
& Schaefer-400 & $\mathbf{0.006}$ (0.815 [0.731, 0.909]) & $\mathbf{0.007}$ & 0.182          & $\mathbf{0.011}$ \\
& CC200        & 0.801 (0.982 [0.882, 1.094])           & 0.492             & 0.523          & 0.482            \\
& AAL90        & 0.622 (0.967 [0.878, 1.066])           & 0.268             & $\mathbf{0.025}$ & 0.602          \\
\midrule
\multirow{3}{*}{$H_1$}
& Schaefer-400 & $\mathbf{<0.001}$ (1.145 [1.106, 1.185]) & $\mathbf{0.008}$ & $\mathbf{0.001}$ & $\mathbf{0.001}$ \\
& CC200        & $\mathbf{0.001}$ (1.101 [1.060, 1.143])  & $\mathbf{0.008}$ & $\mathbf{0.008}$ & $\mathbf{0.001}$ \\
& AAL90        & $\mathbf{0.021}$ (1.090 [1.041, 1.142])  & $\mathbf{0.019}$ & $\mathbf{0.010}$ & $\mathbf{0.001}$ \\
\bottomrule
\end{tabular}
\vspace{-0.5cm}
\end{table}

\section{Conclusion}

\method~is designed for settings where a binary p-value is not enough --- where practitioners need to know how much two cohorts differ. Such settings are widespread, e.g., drug discovery, where compound libraries differ topologically in ring systems ($H_1$) and binding pockets ($H_2$) by quantifiable amounts, biomedical imaging, where confirmatory studies require sample-size calculations grounded in effect-size estimates, and the analysis of neural network activation manifolds~\citep{gardinazzi2024persistent,bhan2025did}, where topological characterisations of training and fine-tuning regimes are an emerging direction. In each of these, \method~replaces the current dichotomy between vectorisation pipelines (predictive but silent on inference) and pairwise tests (inferential but silent on magnitude and direction) with a single object that does both.

\bibliography{bibli}
\bibliographystyle{plainnat}

\medskip






\appendix



\section{Bridging TDA and survival analysis}\label{app:bridge}
The central innovation of our approach lies in recognizing that the birth-death lifecycle of topological features constitutes a time-to-event process amenable to survival analysis. To illustrate this mapping, consider a simple example:

\noindent \paragraph{Example 3.1 (Circle with noise).} Suppose we have a point cloud sampled from a circle with added Gaussian noise, as per Figure \ref{fig:bridge}. As we increase the filtration parameter $\epsilon$ from 0, we observe the following feature lifecycle:
\begin{itemize}
    \item $\epsilon = 0.1$: Many small connected components appear (birth events);
    \item $\epsilon = 0.3$: Components begin merging; the main circular structure emerges as a 1-dimensional loop (loop birth);
    \item $\epsilon = 0.5$: Most noise components die (death events for components); the main loop persists;
    \item $\epsilon = 0.8$: The circular loop finally dies when the "hole" gets filled (loop death).
\end{itemize}

In survival analysis terminology, each connected component and loop is a "subject" entering the study at its birth time and experiencing a "death event" when it disappears. Figure \ref{fig:bridge} illustrates this conceptual mapping. Topological features become subjects with birth times as entry points and death times as events, while filtration parameters provide temporal structure. This mapping enables statistical comparison between subgroups (signal and noise in TDA, or treatment groups in clinical studies) using established survival methodologies.

This reframing fundamentally changes how we approach the signal-noise discrimination problem. This enables both discovery of signal-noise structure and classification of individual features using well-established statistical methodology. 

\begin{figure}
    \centering
    \includegraphics[width=1\linewidth]{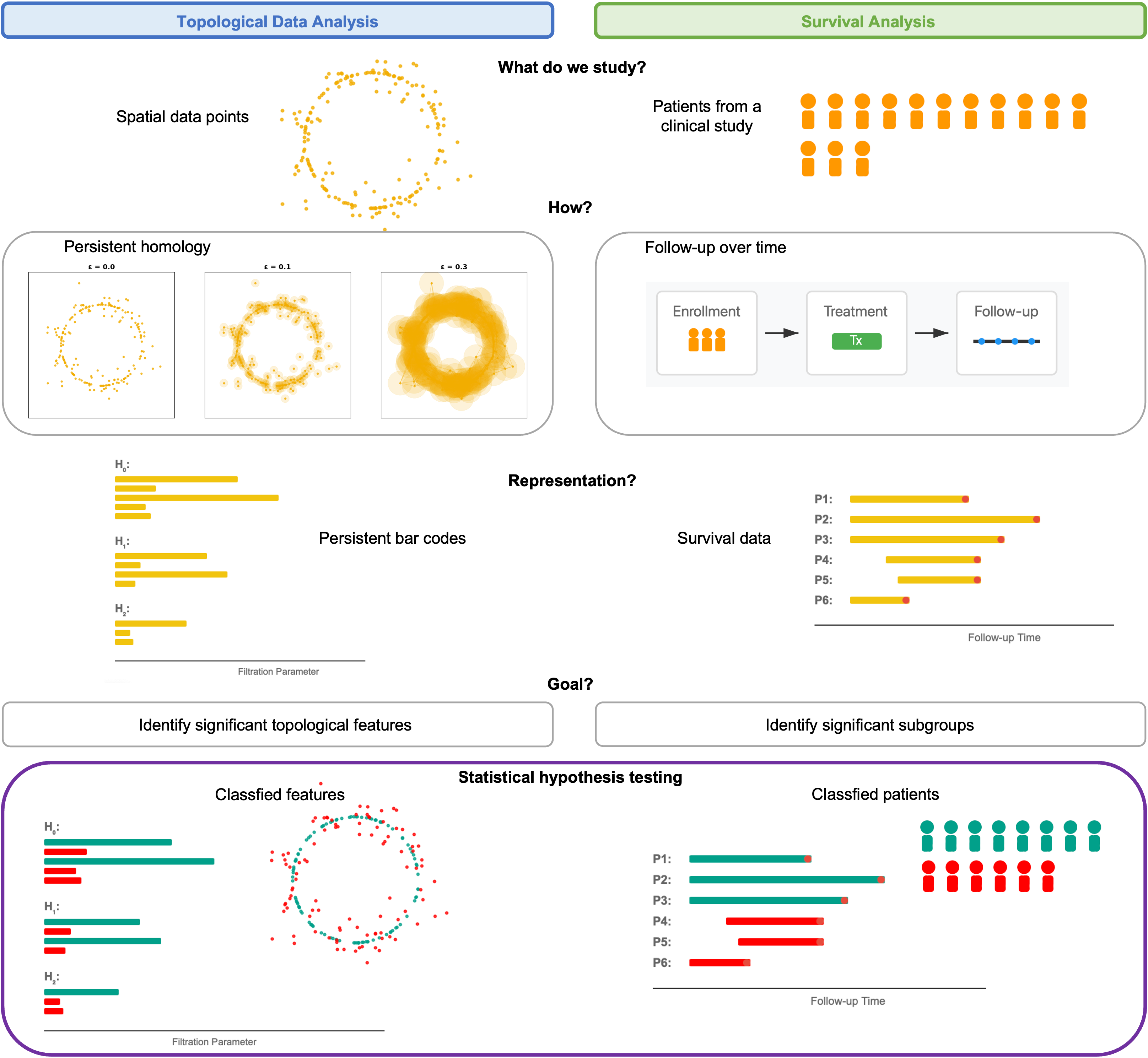}
    \caption{Bridging topological data and survival analysis fields with \texttt{\method}. Our approach operates on spatial data containing genuine topological structure and random noise artifacts. Through persistent homology, we track how topological features emerge and disappear across increasing filtration parameters. Simultaneously, in survival analysis, patients progress through clinical enrollment, treatment, and follow-up phases, generating survival timelines. In both fields, the goal is usually to identify significant populations, either features in TDA, or responders in clinical studies.}
    \label{fig:bridge}
\end{figure}

\section{Further background and related work}\label{app:bckgrd}

\subsection{Persistent homology}

\paragraph{Simplicial complexes.}
A \emph{simplicial complex} $K$ is a collection of simplices (vertices, edges, triangles, tetrahedra, and their higher-dimensional analogues) closed under taking faces: if $\sigma \in K$ and $\tau \subseteq \sigma$, then $\tau \in K$. The $k$-skeleton of $K$ consists of all simplices of dimension at most $k$. Formally, a $k$-simplex is the convex hull of $k+1$ affinely independent points; for example, a 0-simplex is a vertex, a 1-simplex is an edge, and a 2-simplex is a triangle.

A \emph{filtration} is a nested sequence of simplicial complexes $\emptyset = K_0 \subseteq K_1 \subseteq \cdots \subseteq K_n = K$, where simplices are added incrementally. The filtration value $f(\sigma)$ of a simplex $\sigma$ is the smallest index $i$ such that $\sigma \in K_i$. Common filtrations include:
\begin{itemize}[noitemsep,topsep=0pt,leftmargin=*]
    \item \emph{Vietoris-Rips filtration}: Given a point cloud $X \subset \mathbb{R}^d$ and scale parameter $\epsilon$, the Rips complex $\text{VR}_\epsilon(X)$ includes a simplex $\{x_0, \ldots, x_k\}$ if all pairwise distances satisfy $\|x_i - x_j\| \leq \epsilon$.
    \item \emph{Čech filtration}: The Čech complex $\check{C}_\epsilon(X)$ includes a simplex if the corresponding $\epsilon$-balls have non-empty intersection. This is homotopy equivalent to the union of balls but more expensive to compute.
    \item \emph{Alpha filtration}: A subset of the Delaunay triangulation, providing a sparser representation with the same persistent homology as the Čech complex.
\end{itemize}

\paragraph{Persistence diagrams from point cloud data.}
Given a point cloud $X = \{x_1, \ldots, x_n\} \subset \mathbb{R}^d$, persistent homology tracks how topological features evolve across scales. The standard pipeline proceeds as follows:
\begin{enumerate}[noitemsep,topsep=0pt,leftmargin=*]
    \item \emph{Filtration construction}: Build a filtered simplicial complex, typically the Vietoris-Rips filtration $\{\text{VR}_\epsilon(X)\}_{\epsilon \geq 0}$.
    \item \emph{Boundary matrix computation}: Represent the boundary operator $\partial_k: C_k \to C_{k-1}$ as a matrix, where $C_k$ is the vector space of $k$-chains.
    \item \emph{Matrix reduction}: Apply the standard reduction algorithm~\citep{edelsbrunner2002topological} to compute the persistence pairs, identifying which simplices create (birth) and destroy (death) homology classes.
    \item \emph{Diagram extraction}: Output the PD $\mathcal{D} = \{(b_i, d_i)\}$, where each point represents a feature born at filtration value $b_i$ and dying at $d_i$.
\end{enumerate}
The resulting diagram is stable: if two point clouds $X$ and $X'$ satisfy $d_H(X, X') \leq \epsilon$ (Hausdorff distance), then $W_\infty(\mathcal{D}_X, \mathcal{D}_{X'}) \leq \epsilon$~\citep{cohen2005stability}.

\paragraph{Persistence diagrams from graphs.}
For graph data $G = (V, E)$, several filtration strategies produce meaningful PDs:
\begin{itemize}[noitemsep,topsep=0pt,leftmargin=*]
    \item \emph{Degree filtration}~\citep{hoferDeepLearningTopological2017}: Vertices enter the filtration in order of increasing degree, and edges appear when both endpoints are present. This captures hierarchical connectivity: hub vertices (high degree) appear late, while peripheral vertices appear early.
    \item \emph{Heat kernel signature (HKS) filtration}~\citep{carriere2020perslay}: Vertices are ordered by the diagonal of the heat kernel $H_t = e^{-tL}$, where $L$ is the graph Laplacian. This requires eigendecomposition and depends on the diffusion time $t$.
    \item \emph{Sublevel set filtration on node attributes}: When vertices have scalar attributes $f: V \to \mathbb{R}$, the sublevel set filtration adds vertices in order of $f(v)$ and edges when both endpoints are present.
\end{itemize}

For graph filtrations, the $H_0$ features correspond to connected components merging as the filtration progresses, while $H_1$ features correspond to cycles forming and being filled in. The degree filtration used in this work is parameter-free and has been shown to capture chemically meaningful patterns in molecular graphs~\citep{horn2021topological}.

\subsection{Topological Deep Learning}
Recent work integrates topological information into neural architectures. Higher-order message passing on simplicial and cell complexes~\citep{bodnar2021weisfeiler,pmlr-v139-bodnar21a,hajij2025copresheaf} generalizes GNNs beyond pairwise interactions. Persistence-augmented GNNs~\citep{horn2021topological,immonen2023going} incorporate topological features as auxiliary inputs, with applications to molecular property prediction~\citep{xin2025toping,wang2025topotein}. Deep learning with topological signatures~\citep{hoferDeepLearningTopological2017} and topological autoencoders~\citep{moor2020topological} demonstrate the value of persistence-based representations; see~\citet{papamarkou2024position} for a position paper on the field. \method~complements these architectures: its features can augment any GNN as graph-level descriptors while additionally providing statistical inference capabilities.

\subsection{Survival Analysis in Machine Learning}
Survival analysis models time-to-event data using the Kaplan-Meier estimator~\citep{kaplan1958nonparametric} for nonparametric survival estimation and the Cox proportional hazards model~\citep{cox1972regression} for covariate analysis. The Cox model assumes the hazard function takes the form $h(t|x) = h_0(t) \exp(\beta^\top x)$, where $h_0(t)$ is a baseline hazard and $\beta$ are regression coefficients. The proportional hazards assumption states that the hazard ratio between any two individuals is constant over time.

Modern extensions include random survival forests~\citep{ishwaran2008random,murris2025random}, which aggregate survival trees using random feature selection and bootstrap aggregation, and neural approaches such as DeepSurv~\citep{katzman2018deepsurv}, which replaces the linear predictor with a neural network, and DeepHit~\citep{lee2018deephit}, which directly models the probability mass function of event times. 

The log-rank test~\citep{mantel1966evaluation} is the standard nonparametric test for comparing survival distributions between groups. Under the null hypothesis of equal survival functions, the test statistic
\begin{equation}
    \chi^2 = \frac{\left(\sum_j (O_{1j} - E_{1j})\right)^2}{\sum_j V_j}
\end{equation}
follows a $\chi^2_1$ distribution, where $O_{1j}$ and $E_{1j}$ are observed and expected events in group 1 at time $t_j$, and $V_j$ is the variance.

\subsection{Positioning our work}\label{app:landscape}

Table~\ref{tab:landscape} situates \method~within the existing landscape of 
methods for analysing persistence diagrams. Vectorisation methods (top block) 
produce fixed-dimensional representations suitable for downstream machine 
learning but provide no native hypothesis test. Statistical-inference 
methods (middle block) return $p$-values via permutation or kernel procedures 
but provide no effect-size summary and no localisation of where, along 
the persistence axis, two cohorts differ. \method~is the only entry that 
ticks all four columns: a stable vectorisation, a hypothesis test, an 
interpretable effect size, and per-scale localisation through the 
Kaplan-Meier curve. The closest comparator is the Betti-permutation test 
of \citet{islambekov2024vector}, which combines a vectorisation with 
a permutation test on the same object; the differentiator for \method~is 
the effect-size and per-scale-localisation machinery that the survival 
representation makes available natively.

\begin{table*}[t]
    \caption{Comparison of methods for analyzing persistence diagrams. \method~is the only method that simultaneously supports vectorisation for downstream ML, hypothesis testing, and interpretable effect sizes. Methods above the line are vectorisations; methods below the line are statistical tests.}
    \label{tab:landscape}
    \centering
    \footnotesize                        
    \setlength{\tabcolsep}{4pt}          
    \begin{tabular}{@{}lllcccc@{}}       
    \toprule
    Method & Representation & Stability & Vec. & Test & Eff.\ Size & Per-scale \\
    \midrule
    Betti Curves~\citep{giusti2015clique}
        & Feature counts & $\infty$-Wass.\ & \cmark & \xmark & \xmark & \xmark \\
    Pers.\ Images~\citep{adamsPersistenceImagesStable2017}
        & Gauss.-smoothed grid & 1-Wass.\ & \cmark & \xmark & \xmark & \xmark \\
    Pers.\ Landscapes~\citep{bubenik2017persistence}
        & Piecewise-linear fn. & $\infty$-Wass.\ & \cmark & Ltd.$^\dagger$ & \xmark & \xmark \\
    Pers.\ Kernels~\citep{reininghaus2015stable}
        & RKHS embedding & 1-Wass.\ & \cmark & \xmark$^\S$ & \xmark & \xmark \\
    PersLay~\citep{carriere2020perslay}
        & Learned aggregation & Cond.$^\ddagger$ & \cmark & \xmark & \xmark & \xmark \\
    GRIL~\citep{xin2023gril}
        & Gen.\ rank invariant & 1-Lip.\ & \cmark & \xmark & \xmark & \xmark \\
    \midrule
    Wass.\ permutation~\citep{robinson2017hypothesis}
        & Pairwise distances & --- & \xmark & \cmark & \xmark & \xmark \\
    Betti permutation~\citep{islambekov2024vector}
        & Vec.\ Betti function & 1-Wass.\ & \cmark & \cmark & \xmark & \xmark \\
    \midrule
    \textbf{\method~(Ours)}
        & Pers.\ survival fn. & 1-Wass.\ & \cmark & \cmark & \cmark & \cmark \\
    \bottomrule
    \multicolumn{7}{@{}l@{}}{\footnotesize $^\dagger$Provides $z$-tests and permutation tests on landscape norms, but without effect size interpretation.} \\
    \multicolumn{7}{@{}l@{}}{\footnotesize $^\S$Kernels admit a two-sample test only through an external MMD wrapper, with no native effect size.} \\
    \multicolumn{7}{@{}l@{}}{\footnotesize $^\ddagger$1-Wasserstein stability with specific weight functions; 2-Lipschitz w.r.t.\ filtration parameters.}
    \end{tabular}
    \vskip -0.1in
\end{table*}

\section{Proofs}\label{app:proofs}

\subsection{Proof of Theorem~\ref{thm:stability} (Stability)}\label{app:stability}

We provide a complete proof of the stability theorem for \method~vectorisation.

\subsubsection{Preliminaries and Notation}

\begin{definition}[Persistence Diagram]
A persistence diagram is a multiset $D = \{(b_i, d_i)\}_{i=1}^n \subset \{(b,d) \in \mathbb{R}^2 : b \leq d\}$, together with the diagonal $\Delta = \{(x,x) : x \in \mathbb{R}\}$ with infinite multiplicity.
\end{definition}

\begin{definition}[Wasserstein Distance]
The 1-Wasserstein distance between persistence diagrams $D$ and $D'$ is:
\begin{equation}
W_1(D, D') = \inf_\gamma \sum_{x \in D \cup \Delta} \|x - \gamma(x)\|_\infty,
\end{equation}
where the infimum is over all bijections $\gamma : D \cup \Delta \to D' \cup \Delta$.
\end{definition}

\begin{definition}[Persistence Values]
For a persistence diagram $D = \{(b_i, d_i)\}_{i=1}^n$, the persistence values are $p_i = d_i - b_i$ for $i = 1, \ldots, n$.
\end{definition}

\begin{definition}[Empirical Survival Function]
The empirical survival function on persistence values is:
\begin{equation}
\hat{S}_D(t) = \frac{1}{n} \sum_{i=1}^n \mathbf{1}\{p_i > t\} = \frac{|\{i : p_i > t\}|}{n}.
\end{equation}
\end{definition}

\begin{definition}[\method~vectorisation]
For grid points $0 \leq t_1 < t_2 < \cdots < t_k$, the \method~survival vectorisation is:
\begin{equation}
\phi_S(D) = \left(\hat{S}_D(t_1), \hat{S}_D(t_2), \ldots, \hat{S}_D(t_k)\right) \in [0,1]^k.
\end{equation}
\end{definition}

\subsubsection{Supporting Lemmas}

\begin{lemma}[2-Lipschitz Projection]\label{lem:lipschitz}
The persistence projection $\pi : (b,d) \mapsto d - b$ is 2-Lipschitz with respect to the $\ell^\infty$ norm:
\begin{equation}
|p_1 - p_2| = |(d_1 - b_1) - (d_2 - b_2)| \leq 2 \cdot \|(b_1, d_1) - (b_2, d_2)\|_\infty
\end{equation}
for any $(b_1, d_1), (b_2, d_2) \in \mathbb{R}^2$. 
\end{lemma}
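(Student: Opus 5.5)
The plan is a direct estimate: rewrite the persistence difference so that each signed deviation in $b$ and $d$ is bounded by the sup-norm, then apply the triangle inequality. Write
\[
(d_1 - b_1) - (d_2 - b_2) = (d_1 - d_2) - (b_1 - b_2).
\]
The triangle inequality gives
\[
|p_1 - p_2| \leq |d_1 - d_2| + |b_1 - b_2|.
\]

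The key step is that both coordinate deviations are controlled by the same quantity. By definition of the $\ell^\infty$ norm on $\mathbb{R}^2$, both $|d_1 - d_2|$ and $|b_1 - b_2|$ are at most $\|(b_1,d_1)-(b_2,d_2)\|_\infty = \max(|b_1-b_2|, |d_1-d_2|)$. Substituting these two bounds into the triangle inequality yields the factor $2$, which proves the claim.

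There is no substantive obstacle; the only point requiring care is to check that the constant $2$ is sharp and cannot be improved. I would do this by exhibiting the points $(0,1)$ and $(1,0)$ (allowed here since the lemma is stated for arbitrary points of $\mathbb{R}^2$), or the pair $(\epsilon, 1-\epsilon)$ versus $(0,1)$ if one wants to stay within $b \leq d$. In each case the sup-distance is $\epsilon$ while the persistence changes by $2\epsilon$. This sharpness is what later produces the factor $2$ in the second term of Theorem~\ref{thm:stability}. For diagonal matching, the same computation with $(b_2,d_2)$ the nearest diagonal point $((b_1+d_1)/2,(b_1+d_1)/2)$ shows that $p_1 = 2\|(b_1,d_1)-\Delta\|_\infty$, a variant I would record alongside the lemma for use in the cardinality-mismatch term.
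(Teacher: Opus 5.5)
Your proof is correct and is essentially the paper's argument: rewrite the difference as $(d_1-d_2)-(b_1-b_2)$, apply the triangle inequality, and bound each coordinate deviation by the $\ell^\infty$ norm, with the same sharpness pair $(0,1)$ versus $(\epsilon,1-\epsilon)$. Your diagonal-matching remark is the content of the paper's separate Diagonal Matching Cost lemma.
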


\begin{proof}
We have:
\begin{align*}
|(d_1 - b_1) - (d_2 - b_2)| &= |(d_1 - d_2) - (b_1 - b_2)| \\
&\leq |d_1 - d_2| + |b_1 - b_2| \quad \text{(triangle inequality)}\\
&\leq 2 \max(|d_1 - d_2|, |b_1 - b_2|) \\
&= 2 \|(b_1, d_1) - (b_2, d_2)\|_\infty.
\end{align*}

Consider $(b_1, d_1) = (0, 1)$ and $(b_2, d_2) = (\epsilon, 1 - \epsilon)$ for small $\epsilon > 0$. Then:
\begin{itemize}
    \item $\|(b_1, d_1) - (b_2, d_2)\|_\infty = \max(\epsilon, \epsilon) = \epsilon$,
    \item $|p_1 - p_2| = |1 - (1 - 2\epsilon)| = 2\epsilon$.
\end{itemize}
\end{proof}

\begin{lemma}[Diagonal Matching Cost]\label{lem:diagonal}
For a point $(b,d)$ with $d > b$ matched to the diagonal, the optimal matching cost is:
\begin{equation}
\inf_{(c,c) \in \Delta} \|(b,d) - (c,c)\|_\infty = \frac{d-b}{2} = \frac{p}{2},
\end{equation}
achieved at $c = \frac{b+d}{2}$. Equivalently, the persistence value satisfies $p = 2 \cdot (\text{matching cost})$.
\textit{Note:} This is a standard result in optimal transport on persistence diagrams; we include the proof for completeness.
\end{lemma}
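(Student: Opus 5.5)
The plan is to compute the distance from a fixed off-diagonal point $(b,d)$ with $d>b$ to a general point $(c,c)$ of the diagonal, and then minimise over $c$. By definition of the $\ell^\infty$ norm,
\[
\|(b,d)-(c,c)\|_\infty = \max\bigl(|b-c|,\,|d-c|\bigr),
\]
so the quantity to minimise is $f(c)=\max(|b-c|,|d-c|)$ over $c\in\mathbb{R}$.

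\emph{Lower bound.} For every $c$, the triangle inequality gives
\[
d-b = |(d-c)-(b-c)| \le |d-c|+|b-c| \le 2f(c).
\]
Hence $f(c)\ge (d-b)/2 = p/2$ for every diagonal point. This is just Lemma~\ref{lem:lipschitz} applied to the pair $(b,d)$ and $(c,c)$, since the diagonal point has persistence $0$.

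\emph{Attainment.} Take $c^*=(b+d)/2$. Then $|b-c^*| = |d-c^*| = (d-b)/2$, so $f(c^*)=p/2$. The infimum is therefore a minimum, achieved at the midpoint.

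I would also check uniqueness of the minimiser, since it is useful later when identifying the optimal diagonal matching. If $c<c^*$, then $|d-c|>p/2$. If $c>c^*$, then $|b-c|>p/2$. So the midpoint is the unique minimiser. Rearranging $f(c^*)=p/2$ gives the stated equivalence $p = 2\cdot(\text{matching cost})$.

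There is no real obstacle in this argument. The only care needed is the sign bookkeeping in the lower bound, and that is handled cleanly by the triangle inequality step above, so no case split on the position of $c$ is required.
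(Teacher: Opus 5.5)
Your proof is correct and follows the same route as the paper: write the cost as $\max(|b-c|,|d-c|)$ and evaluate it at the midpoint $c=(b+d)/2$. Your triangle-inequality lower bound $d-b\le|d-c|+|b-c|\le 2\max(|b-c|,|d-c|)$ supplies the justification that the paper only asserts when it says the maximum is minimised where the two terms are equal, and the uniqueness check is a correct extra the lemma does not need.
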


\begin{proof}
For any $c \in \mathbb{R}$:
\begin{equation}
\|(b,d) - (c,c)\|_\infty = \max(|b-c|, |d-c|).
\end{equation}
This is minimized when $|b-c| = |d-c|$, which occurs at $c = \frac{b+d}{2}$. The minimum value is:
\begin{equation}
\left|b - \frac{b+d}{2}\right| = \frac{d-b}{2} = \frac{p}{2}. \qedhere
\end{equation}
\end{proof}

\begin{lemma}[Transport Cost on Matched Persistence Values Values]\label{lem:transport}
Let $D = \{(b_i, d_i)\}_{i=1}^n$ and $D' = \{(b'_j, d'_j)\}_{j=1}^m$ be persistence diagrams, and let $\gamma^*$ be an optimal matching for $W_1(D, D')$. Define matched persistence values for each $i \in \{1, \ldots, n\}$:
\begin{equation}
p^{(\mathrm{ext})}_i = 
\begin{cases}
p'_j = d'_j - b'_j & \text{if } (b_i, d_i) \leftrightarrow (b'_j, d'_j) \text{ under } \gamma^* \\
0 & \text{if } (b_i, d_i) \leftrightarrow \Delta \text{ under } \gamma^*
\end{cases}
\end{equation}
Then:
\begin{equation}
\sum_{i=1}^{n} |p_i - p^{(\mathrm{ext})}_i| \leq 2 \cdot W_1(D, D').
\end{equation}
\end{lemma}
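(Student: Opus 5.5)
The plan is to decompose the optimal transport $\gamma^*$ realising $W_1(D,D')$ according to what each off-diagonal point of $D$ is matched to, and to bound each contribution to $\sum_i |p_i - p^{(\mathrm{ext})}_i|$ separately. Since $\gamma^*$ is a bijection of $D\cup\Delta$ onto $D'\cup\Delta$, every $x_i=(b_i,d_i)\in D$ goes either to an off-diagonal point $(b'_j,d'_j)$ of $D'$ or to a diagonal point. Write $M\subseteq\{1,\dots,n\}$ for the indices of the first kind and $U$ for the second. The optimal cost splits as
\[
W_1(D,D') = \sum_{i\in M}\|x_i-\gamma^*(x_i)\|_\infty + \sum_{i\in U}\|x_i-\gamma^*(x_i)\|_\infty + C',
\]
where $C'\ge 0$ is the cost of the points of $D'$ that are matched to the diagonal. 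Every term is nonnegative.

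For $i\in M$, Lemma~\ref{lem:lipschitz} gives
\[
|p_i - p^{(\mathrm{ext})}_i| = |p_i - p'_{\gamma^*(i)}| \le 2\,\|x_i-\gamma^*(x_i)\|_\infty .
\]
For $i\in U$ we have $p^{(\mathrm{ext})}_i=0$, so the contribution is $p_i$. The image $\gamma^*(x_i)$ is some point $(c,c)\in\Delta$, and its cost is at least the infimum over the diagonal. Lemma~\ref{lem:diagonal} then gives
\[
\|x_i-(c,c)\|_\infty \ge \frac{p_i}{2},
\qquad\text{that is,}\qquad
p_i \le 2\,\|x_i-\gamma^*(x_i)\|_\infty .
\]
Summing over $M\cup U=\{1,\dots,n\}$ and dropping the nonnegative term $C'$ yields the claimed bound $2W_1(D,D')$.

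The main point needing care is the formal setup of the bijection. If the diagonal is parametrised pointwise, an optimal $\gamma^*$ might send $x_i$ to a diagonal point other than the closest one. I would deal with this by not assuming the projection $c=(b_i+d_i)/2$ is used, and instead invoking only the inequality direction of Lemma~\ref{lem:diagonal}, as done above. The other subtlety is existence of an optimal matching. For finite diagrams this is standard: one may reduce to a finite assignment problem in which each point is either matched to a point of the other diagram or to its diagonal projection. If one prefers to avoid existence altogether, the same argument applied to a near-optimal $\gamma$ with cost at most $W_1+\varepsilon$ gives the bound up to $2\varepsilon$, and letting $\varepsilon\to0$ finishes the proof.
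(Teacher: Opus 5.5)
Your proposal is correct and follows the same route as the paper's proof. Like the paper, you split by whether each point of $D$ is matched off-diagonally or to $\Delta$, apply Lemma~\ref{lem:lipschitz} in the first case and Lemma~\ref{lem:diagonal} in the second, then sum and drop the nonnegative cost of the remaining matches; your use of only the inequality $\|x_i-\gamma^*(x_i)\|_\infty \ge p_i/2$ (where the paper writes an equality) and your remark on the existence of $\gamma^*$ are minor refinements that make the argument slightly more rigorous.
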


\begin{proof}
We analyze the contribution from each type of match in $\gamma^*$.

\textbf{Case 1: Off-diagonal to off-diagonal.} 
If $(b_i, d_i) \leftrightarrow (b'_j, d'_j)$, then by Lemma~\ref{lem:lipschitz}:
\begin{equation}
|p_i - p'_j| \leq 2 \|(b_i, d_i) - (b'_j, d'_j)\|_\infty = 2 \cdot (\text{matching cost for this pair}).
\end{equation}

\textbf{Case 2: Off-diagonal to diagonal.}
If $(b_i, d_i) \leftrightarrow \Delta$, then $p^{(\mathrm{ext})}_i = 0$ and the contribution is:
\begin{equation}
|p_i - 0| = p_i = 2 \cdot \frac{p_i}{2} = 2 \cdot (\text{matching cost for this pair}),
\end{equation}
by Lemma~\ref{lem:diagonal}.

\textbf{Summing over all matches:}
\begin{equation}
\sum_{i=1}^{n} |p_i - p^{(\mathrm{ext})}_i| \leq 2 \cdot \sum_{\text{matches involving } D} (\text{matching cost}) \leq 2 \cdot W_1(D, D'). \qedhere
\end{equation}
\end{proof}

\subsubsection{Main Proof}

\begin{proof}[Proof of Theorem~\ref{thm:stability}]

\textbf{Step 1: Setup and Matching Partition.}

Let $\gamma^* : D \cup \Delta \to D' \cup \Delta$ be an optimal matching achieving $W_1(D, D')$. Partition the off-diagonal points according to their matching:
\begin{itemize}
    \item $\mathcal{A} = \{(i,j) : (b_i, d_i) \leftrightarrow (b'_j, d'_j)\}$ --- off-diagonal to off-diagonal,
    \item $\mathcal{B} = \{i : (b_i, d_i) \leftrightarrow \Delta\}$ --- points of $D$ matched to diagonal,
    \item $\mathcal{C} = \{j : \Delta \leftrightarrow (b'_j, d'_j)\}$ --- points of $D'$ matched from diagonal.
\end{itemize}

Let $a = |\mathcal{A}|$, so $|\mathcal{B}| = n - a$ and $|\mathcal{C}| = m - a$.

\textit{Note:} In practice, when the Wasserstein distance is small relative to persistence values, most off-diagonal points are matched off-diagonal to off-diagonal, with diagonal matching reserved for cardinality differences or very short-lived features.

\textbf{Step 2: Matched Persistence Values.}

For each $i \in \{1, \ldots, n\}$, define matched persistence values as in Lemma~\ref{lem:transport}:
\begin{equation}
p^{(\mathrm{ext})}_i = 
\begin{cases}
p'_j & \text{if } (b_i, d_i) \leftrightarrow (b'_j, d'_j) \in D' \\
0 & \text{if } (b_i, d_i) \leftrightarrow \Delta
\end{cases}
\end{equation}

\textit{Note:} Even when $n = m$, we may have $|\mathcal{B}| = |\mathcal{C}| > 0$. That is, equal cardinality does \emph{not} imply all points are matched off-diagonal to off-diagonal.

\textbf{Step 3: Decompose the Survival Function Difference.}

For any threshold $t \geq 0$, let $N_D(t) = |\{i : p_i > t\}|$ and $N_{D'}(t) = |\{j : p'_j > t\}|$. Then:
\begin{equation}
\hat{S}_D(t) = \frac{N_D(t)}{n}, \qquad \hat{S}_{D'}(t) = \frac{N_{D'}(t)}{m}.
\end{equation}

We decompose the difference as:
\begin{align}
\hat{S}_D(t) - \hat{S}_{D'}(t) &= \frac{N_D(t)}{n} - \frac{N_{D'}(t)}{m} \\
&= \underbrace{\left(\frac{N_D(t)}{n} - \frac{N_D(t)}{m}\right)}_{\text{Term (I): normalization difference}} + \underbrace{\left(\frac{N_D(t) - N_{D'}(t)}{m}\right)}_{\text{Term (II): count difference}}.
\end{align}

\textbf{Step 4a: Bound Term (I) --- Normalization Difference.}

\begin{equation}
\left| \frac{N_D(t)}{n} - \frac{N_D(t)}{m} \right| = N_D(t) \left| \frac{1}{n} - \frac{1}{m} \right| = N_D(t) \cdot \frac{|m - n|}{nm}.
\end{equation}

Since $N_D(t) \leq n$:
\begin{equation}
|\text{Term (I)}| \leq n \cdot \frac{|m - n|}{nm} = \frac{|m - n|}{m} = \left| 1 - \frac{n}{m} \right|.
\end{equation}

\textbf{Step 4b: Bound the cardinality difference via diagonal matching.}

The bound $|\text{Term (I)}| \leq |1 - n/m|$ from Step 4 can be tightened by observing that the cardinality difference itself is controlled by $W_1$. Without loss of generality assume $m \geq n$ (the case $n > m$ is symmetric). In any optimal matching $\gamma^*$ achieving $W_1(D, D')$, the set $\mathcal{C} = \{j : \Delta \leftrightarrow (b'_j, d'_j)\}$ defined in Step~1 satisfies $|\mathcal{C}| = m - a \geq m - n$, since at most $a \leq n$ points of $D'$ can be matched off-diagonal. By Lemma~\ref{lem:diagonal}, each diagonally-matched point $(b'_j, d'_j)$ incurs cost $p'_j / 2$ in $W_1$. Therefore:
\begin{equation}
W_1(D, D') \;\geq\; \sum_{j \in \mathcal{C}} \frac{p'_j}{2} \;\geq\; |\mathcal{C}| \cdot \frac{\delta_\Delta}{2} \;\geq\; (m - n) \cdot \frac{\delta_\Delta}{2},
\end{equation}
where $\delta_\Delta = \min_{j \in \mathcal{C}} p'_j$. Rearranging:
\begin{equation}
\left| 1 - \frac{n}{m} \right| \;=\; \frac{m - n}{m} \;\leq\; \frac{2 \, W_1(D, D')}{m \cdot \delta_\Delta} \;=\; \frac{2 \, W_1(D, D')}{\max(n,m) \cdot \delta_\Delta}.
\end{equation}

\textbf{Step 5: Bound Term (II) --- Count Difference via Straddling.}

We relate $N_D(t)$ and $N_{D'}(t)$ through the matching $\gamma^*$.

\textbf{Step 5a: Express counts using the matching structure.}

\begin{align}
N_D(t) &= \sum_{i=1}^{n} \mathbf{1}\{p_i > t\} \\
&= \sum_{(i,j) \in \mathcal{A}} \mathbf{1}\{p_i > t\} + \sum_{i \in \mathcal{B}} \mathbf{1}\{p_i > t\}.
\end{align}

\begin{align}
N_{D'}(t) &= \sum_{j=1}^{m} \mathbf{1}\{p'_j > t\} \\
&= \sum_{(i,j) \in \mathcal{A}} \mathbf{1}\{p'_j > t\} + \sum_{j \in \mathcal{C}} \mathbf{1}\{p'_j > t\}.
\end{align}

Therefore:
\begin{align}
N_D(t) - N_{D'}(t) &= \sum_{(i,j) \in \mathcal{A}} \left(\mathbf{1}\{p_i > t\} - \mathbf{1}\{p'_j > t\}\right) \label{eq:A-contribution}\\
&\quad + \sum_{i \in \mathcal{B}} \mathbf{1}\{p_i > t\} - \sum_{j \in \mathcal{C}} \mathbf{1}\{p'_j > t\}. \label{eq:BC-contribution}
\end{align}

\textbf{Step 5b: Unify using matched persistence values.}

Recall that for $i \in \mathcal{B}$, we have $p^{(\mathrm{ext})}_i = 0$, so:
\begin{equation}
\mathbf{1}\{p_i > t\} - \mathbf{1}\{p^{(\mathrm{ext})}_i > t\} = \mathbf{1}\{p_i > t\} - \mathbf{1}\{0 > t\} = \mathbf{1}\{p_i > t\}
\end{equation}
since $\mathbf{1}\{0 > t\} = 0$ for all $t \geq 0$.

Thus:
\begin{equation}
\sum_{i=1}^{n} \left(\mathbf{1}\{p_i > t\} - \mathbf{1}\{p^{(\mathrm{ext})}_i > t\}\right) = \sum_{(i,j) \in \mathcal{A}} \left(\mathbf{1}\{p_i > t\} - \mathbf{1}\{p'_j > t\}\right) + \sum_{i \in \mathcal{B}} \mathbf{1}\{p_i > t\}.
\end{equation}

\textbf{Step 5c: Define the straddling set.}

\begin{definition}[Straddling]
A pair $(p_i, p^{(\mathrm{ext})}_i)$ \emph{straddles} threshold $t$ if exactly one value exceeds $t$:
\begin{equation}
\mathcal{S}(t) = \left\{ i \in \{1, \ldots, n\} : \min(p_i, p^{(\mathrm{ext})}_i) \leq t < \max(p_i, p^{(\mathrm{ext})}_i) \right\}.
\end{equation}
\end{definition}

We also claim $\left| \mathbf{1}\{p_i > t\} - \mathbf{1}\{p^{(\mathrm{ext})}_i > t\} \right| = 1$ if and only if $i \in \mathcal{S}(t)$.

\begin{proof}[Proof of Claim]
The indicator difference equals 1 exactly when one value exceeds $t$ and the other does not:
\begin{center}
\begin{tabular}{cccc}
\toprule
$p_i$ vs $t$ & $p^{(\mathrm{ext})}_i$ vs $t$ & $|\mathbf{1}\{p_i > t\} - \mathbf{1}\{p^{(\mathrm{ext})}_i > t\}|$ & Straddles? \\
\midrule
$p_i > t$ & $p^{(\mathrm{ext})}_i > t$ & 0 & No \\
$p_i > t$ & $p^{(\mathrm{ext})}_i \leq t$ & 1 & Yes \\
$p_i \leq t$ & $p^{(\mathrm{ext})}_i > t$ & 1 & Yes \\
$p_i \leq t$ & $p^{(\mathrm{ext})}_i \leq t$ & 0 & No \\
\bottomrule
\end{tabular}
\end{center}
The straddling condition $\min(p_i, p^{(\mathrm{ext})}_i) \leq t < \max(p_i, p^{(\mathrm{ext})}_i)$ holds exactly in rows 2 and 3.
\end{proof}

\textbf{Step 5d: Bound using straddling count.}

From Step 5b:
\begin{equation}
\left| \sum_{i=1}^{n} \left(\mathbf{1}\{p_i > t\} - \mathbf{1}\{p^{(\mathrm{ext})}_i > t\}\right) \right| \leq \sum_{i=1}^{n} \left| \mathbf{1}\{p_i > t\} - \mathbf{1}\{p^{(\mathrm{ext})}_i > t\} \right| = |\mathcal{S}(t)|.
\end{equation}

\textbf{Step 5e: Account for points in $\mathcal{C}$.}

The count $N_{D'}(t)$ includes contributions from points in $\mathcal{C}$ (matched from diagonal), which are not captured by the sum over $i \in \{1, \ldots, n\}$. We have:
\begin{equation}
N_D(t) - N_{D'}(t) = \sum_{i=1}^{n} \left(\mathbf{1}\{p_i > t\} - \mathbf{1}\{p^{(\mathrm{ext})}_i > t\}\right) - \sum_{j \in \mathcal{C}} \mathbf{1}\{p'_j > t\}.
\end{equation}

Therefore:
\begin{equation}
|N_D(t) - N_{D'}(t)| \leq |\mathcal{S}(t)| + |\mathcal{C}|.
\end{equation}

\textbf{Step 5f: Bound $|\mathcal{S}(t)|$ and $|\mathcal{C}|$.}

For $i \in \mathcal{S}(t)$, we have $p_i \neq p^{(\mathrm{ext})}_i$. Let $\delta = \min_{i: p_i \neq p^{(\mathrm{ext})}_i} |p_i - p^{(\mathrm{ext})}_i|$. By Lemma~\ref{lem:transport}, each index with $p_i \neq p^{(\mathrm{ext})}_i$ contributes at least $\delta$ to $\sum_i |p_i - p^{(\mathrm{ext})}_i| \leq 2 W_1(D, D')$, so:
\begin{equation}
|\mathcal{S}(t)| \leq \frac{2 W_1(D, D')}{\delta}.
\end{equation}

For $|\mathcal{C}|$: each $j \in \mathcal{C}$ corresponds to $(b'_j, d'_j) \in D'$ matched from the diagonal, contributing $p'_j / 2$ to $W_1$ by Lemma~\ref{lem:diagonal}. With $\delta_\Delta = \min_{j \in \mathcal{C}} p'_j$:
\begin{equation}
|\mathcal{C}| \cdot \frac{\delta_\Delta}{2} \leq W_1(D, D'), 
\quad\text{so}\quad
|\mathcal{C}| \leq \frac{2 W_1(D, D')}{\delta_\Delta}.
\end{equation}

\textbf{Step 6: Combine the Bounds.}

By symmetry of the bound in $D$ and $D'$, assume without loss of generality $m = \max(n, m)$. From Step 4b:
\begin{equation}
|\text{Term (I)}| \leq \frac{2 W_1(D, D')}{\delta_\Delta \cdot \max(n, m)}.
\end{equation}
For Term (II), Steps 5e--5f give $|N_D(t) - N_{D'}(t)| \leq |\mathcal{S}(t)| + |\mathcal{C}|$. Dividing by $m = \max(n,m)$ and applying Step 5f:
\begin{equation}
|\text{Term (II)}| \leq \frac{2 W_1(D, D')}{\delta \cdot \max(n,m)} + \frac{2 W_1(D, D')}{\delta_\Delta \cdot \max(n, m)}.
\end{equation}
Combining and using $\max(n,m) \geq \min(n,m)$ to weaken the $\delta$ term:
\begin{align}
|\hat S_D(t) - \hat S_{D'}(t)| 
&\leq |\text{Term (I)}| + |\text{Term (II)}| \\
&\leq \frac{4 W_1(D, D')}{\delta_\Delta \cdot \max(n, m)} + \frac{2 W_1(D, D')}{\delta \cdot \min(n, m)}.
\end{align}

\textbf{Step 7: Conclude.}

Taking the maximum over the grid points $t_1, \ldots, t_k$:
\begin{equation}
\|\phi_S(D;\mathbf{t}) - \phi_S(D';\mathbf{t})\|_\infty \leq \frac{4 W_1(D, D')}{\delta_\Delta \cdot \max(n, m)} + \frac{2 W_1(D, D')}{\delta \cdot \min(n, m)}.
\end{equation}
Both terms vanish as $W_1(D, D') \to 0$, establishing continuity in the 1-Wasserstein topology for all diagram pairs.

\end{proof}

\subsubsection{Proof of Corollary~\ref{cor:continuity}}

\begin{proof}
Let $\{D_k\}_{k=1}^\infty$ be a sequence of PDs with $|D_k| = |D| = N$ and $W_1(D_k, D) \to 0$.

By Theorem~\ref{thm:stability}, with $\delta_k$ and $\delta_\Delta^k$ defined for the optimal matching between $D_k$ and $D$:
\begin{equation}
\|\phi_S(D_k) - \phi_S(D)\|_\infty \leq \frac{4 W_1(D_k, D)}{\delta_\Delta^k \cdot N} + \frac{2 W_1(D_k, D)}{\delta_k \cdot N}.
\end{equation}

As $W_1(D_k, D) \to 0$, the optimal matching $\gamma_k^*$ eventually pairs each point in $D_k$ with a unique nearby point in $D$ (diagonal matching becomes suboptimal when points are close), so $\mathcal{C} = \emptyset$ for $k$ sufficiently large and the first term vanishes identically.

Since the persistence values of $D$ are distinct, there exists $\delta_{\min} > 0$ such that $|p_i - p_j| \geq \delta_{\min}$ for all $i \neq j$ in $D$. For sufficiently large $k$, $\delta_k \geq \delta_{\min} / 2$, giving:
\begin{equation}
\|\phi_S(D_k) - \phi_S(D)\|_\infty \leq \frac{4 W_1(D_k, D)}{\delta_{\min} \cdot N} \to 0.
\end{equation}
\end{proof}

\subsubsection{Remarks on the Stability Bound}

\begin{remark}[Role of the constants]
The bound contains two constants: $4$ on the $\delta_\Delta$ component and $2$ on the $\delta$ component. The factor of 2 in both arises from two structural facts: (i) the persistence projection $(b,d) \mapsto d - b$ is 2-Lipschitz with respect to the $\ell^\infty$ norm (Lemma~\ref{lem:lipschitz}), and (ii) for diagonal matching, the persistence value equals twice the Wasserstein cost: $p = 2 \cdot (p/2)$ (Lemma~\ref{lem:diagonal}). The factor of 4 on the $\delta_\Delta$ component reflects two independent contributions: $\mathcal{C}$ controls both the cardinality difference (Term (I), Step 4b) and part of the count difference (Term (II), Step 5f), each contributing $2 W_1 / (\delta_\Delta \cdot \max(n,m))$.
\end{remark}

\begin{remark}[Data-dependent quantities $\delta$ and $\delta_\Delta$]
The bound depends on two data-dependent quantities: $\delta$, the minimum nonzero gap between matched persistence values, and $\delta_\Delta$, the minimum persistence among diagonally-matched features. Both can in principle be arbitrarily small, but in practice:
\begin{itemize}
    \item For point clouds from continuous distributions, persistence values are generically distinct with $\delta > 0$.
    \item The number of pairs straddling any threshold is bounded by $2W_1/\delta$, providing meaningful control even when $\delta$ is small.
    \item $\delta_\Delta$ is bounded below by the smallest persistence in the diagram, which for Vietoris-Rips filtrations is controlled by the sampling density.
\end{itemize}
\end{remark}

\begin{remark}[Comparison with other methods]
The stability of \method~is comparable in form to other vectorisation methods:
\begin{itemize}
    \item Persistence Images: $\|PI(D) - PI(D')\|_\infty \leq C(\sigma) \cdot W_1(D, D')$, where $C(\sigma) \to \infty$ as the kernel bandwidth $\sigma \to 0$~\citep{adamsPersistenceImagesStable2017}.
    \item Persistence Landscapes: $\|\lambda(D) - \lambda(D')\|_\infty \leq W_\infty(D, D')$~\citep{bubenik2017persistence}; 1-Lipschitz with respect to $W_\infty$, which is stronger than $W_1$ but vacuous for diagrams of differing cardinality.
\end{itemize}
\method's bound is fully $W_1$-controlled and remains finite for unequal-cardinality diagrams without requiring a kernel bandwidth, with data-dependent dependence on $\delta$ and $\delta_\Delta$ analogous to how PI's constant depends on $\sigma$.
\end{remark}

\section{Type I Error Control: Full Results}
\label{app:type1}

We first evaluate our approach using survival analysis to distinguish topological signal from noise based on death times across homology groups $H_0$ (connected components), $H_1$ (loops), and $H_2$ (voids).  Figure \ref{fig:torus1} shows results on an illustrative example of a 4D torus with fixed signal probability. We display Kaplan-Meier survival curves on all topological features (A) showing the proportion of features remaining alive at each filtration parameter $\epsilon$. Figure \ref{fig:torus1} B displays Kaplan-Meier survival curves for each signal-noise features, with associated p-value from the log-rank test. Figure \ref{fig:torus1} C presents a complementary perspective via death time density distributions. In $H_0$ and $H_1$, signal features significantly outlive noise features ($p < 0.0001$, log-rank test). For voids in $H_2$, we see that signal voids die at the most advanced filtration. 

\paragraph{Strong discrimination across signal regimes.} Figure \ref{fig:torus1}A shows pronounced discrimination between signal and noise populations across all three homological dimensions. All $H_0$, $H_1$, and  $H_2$ exhibit consistently strong discrimination with log-rank $p$-values below the significance threshold ($p < 0.05$) throughout the signal probability cases, in particular with balanced signal-noise ratios ($0.40-0.60$ signal probability).

\paragraph{Effect size patterns reveal survival differences.} Figure \ref{fig:torus1}B quantifies the magnitude of survival differences through median survival differences (signal minus noise). All three dimensions exhibit decreasing patterns as signal probability increases. In $H_0$ and $H_1$, the transition from positive to negative differences signal probability (around $0.10$ and $0.20$, respectively) indicates that in low signal cases, noise features die earlier than signal features, while in high signal cases, the reverse occurs. When signal dominates, signal features may merge/die earlier due to increased connectivity. $H_2$ shows predominantly positive differences, indicating that signal features consistently die at a higher filtration compared to noise features. Notably, median differences approaching zero align with non-significant $p$-values, confirming the relationship between effect size and statistical significance.

Whether topological features from genuine geometric structures die at systematically different filtration scales compared to noise features, these illustrative results provide affirmative evidence across all homological dimensions ($H_0$, $H_1$, and $H_2$). Signal and noise features exhibit fundamentally different survival characteristics. 

\begin{figure}
    \centering
    \includegraphics[width=1\linewidth]{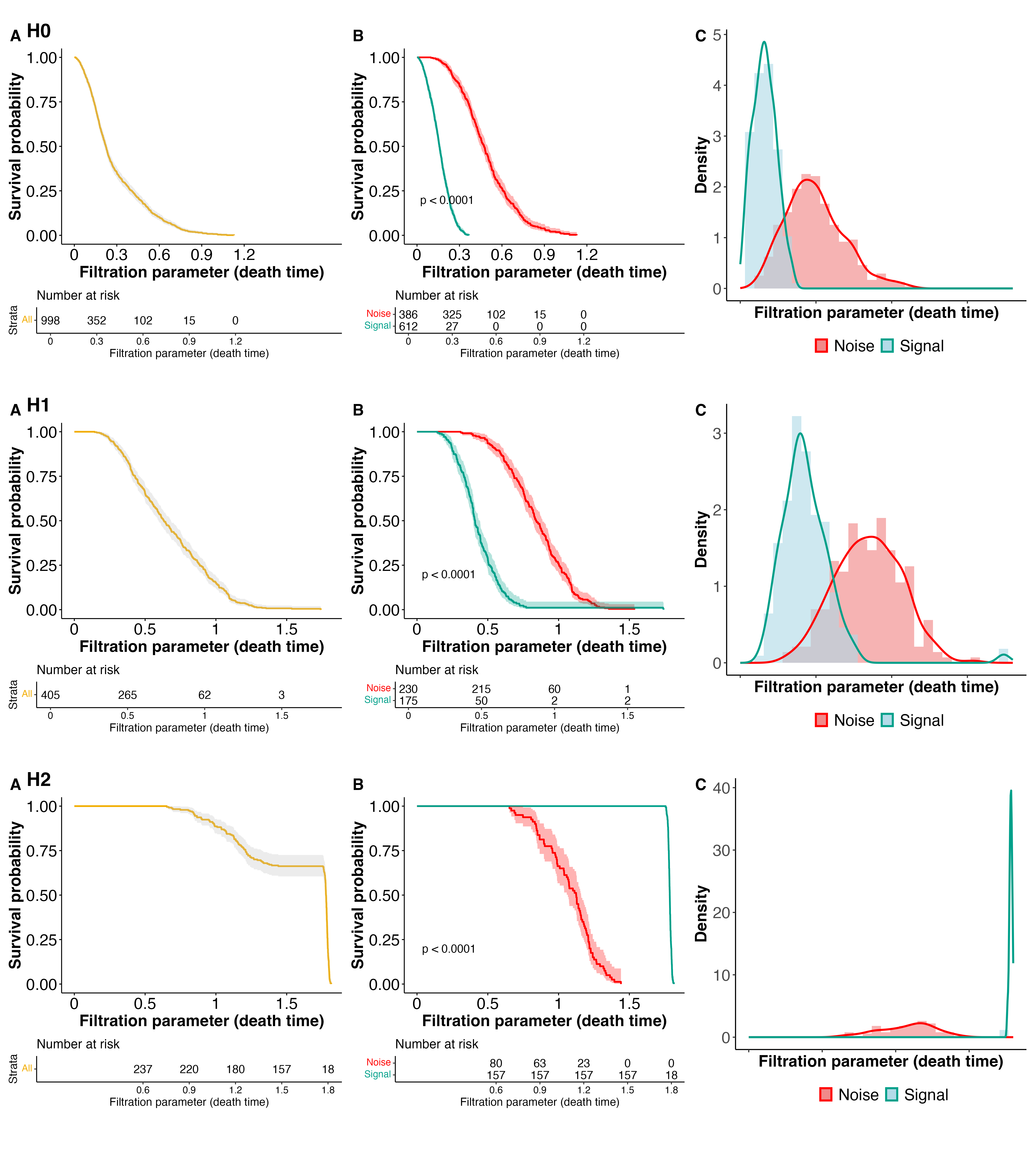}
    \caption{Kaplan-Meier estimators of topological features in 4D torus data with signal parameter of 0.6. Each row corresponds to a different homology dimension computed by persistent homology (e.g., dimension 0 for connected components, 1 for loops, and 2 for voids). (A) Overall survival curves with confidence intervals and risk set tables. (B) Stratified survival curves comparing signal vs. noise components, with $p$-values from log-rank tests. (C) Density distributions of filtration parameters (death times) for both signal and noise components.}
    \label{fig:torus1}
\end{figure}

Following the simulation settings from \cite{stolz2023outlier}, we evaluate \texttt{\method} on synthetic point clouds with controlled signal-noise ratios. 

\paragraph{Signal.} We generate synthetic point clouds containing mixtures of signal points (lying on known topological structures) and noise points. For each dataset, we control the signal probability $p_{signal} \in [0,1]$, which determines the proportion of points that belong to the underlying topological structure versus noise.
\textbf{3D Sphere}: Signal points are uniformly distributed on the unit sphere $S^2 \subset \mathbb{R}^3$, generated by normalizing random vectors from a 3D Gaussian distribution. This provides a canonical 2-dimensional manifold with known homology (single connected component, no 1-loop, single void).
\textbf{4D Torus}: Signal points lie on a 2-torus embedded in $\mathbb{R}^4$ using the parametrization: $(\cos(\gamma), \sin(\gamma), \cos(\phi), \sin(\phi))$, where $\gamma, \phi \in (0, 2\pi)$ are uniformly distributed angles. This flat torus embedding creates a more complex test case with characteristic topological signature (single connected component, two independent 1-loops, single void).
\textbf{4D Klein Bottle}: Signal points are generated using the Klein bottle parametrization:
\begin{align*}
    x_1 &= \cos(\gamma)(r\cos(\phi) + C) \\
    x_2 &= \sin(\gamma)(r\cos(\phi) + C) \\
    x_3 &= \cos(\gamma/2) \cdot r\sin(\phi) \\
    x_4 &= \sin(\gamma/2) \cdot r\sin(\phi)
\end{align*}
with $r = 3$, $C = 2$, and $\gamma, \phi \in (0, 2\pi)$. The Klein bottle represents a non-orientable surface with known homology (single connected component, single 1-loop, no voids), testing method robustness on non-orientable manifolds.

\paragraph{Noise.} To assess robustness across different noise characteristics, we implement four distinct noise models for the sphere experiments, each probing different aspects of the signal-noise discrimination problem:
Cube noise points are uniformly distributed in the cube $[-1, 1]^3$, creating ambient noise that completely surrounds the sphere signal.
Plane noise points are confined to the $xy$-plane, uniformly distributed in $[-3, 3]^2 \times {0}$, creating structured noise in a lower-dimensional subspace that intersects the sphere.
Line noise points lie on the $x$-axis with coordinates $(\alpha, 0, 0)$ where $\alpha$ is uniformly distributed in $[-50, 50]$, representing highly structured 1D noise geometrically distant from the 2D sphere signal.
Laplace noise points follow the same line structure but with $\alpha$ drawn from a Laplace distribution $\text{Laplace}(\mu=4, \sigma=0.5)$, clipped to $[-50, 50]$, testing robustness to different probability distributions within the same geometric structure.
For the torus experiments, noise points follow the same parametric form as signal points but with radius $r$ uniformly distributed in $(0, 2)$ instead of being fixed at 1, creating manifold-respecting noise that occupies a different region of the embedding space.
For Klein bottle experiments, noise points use the same parametrization but with $r \in [2, 4]$ and $C \in [1, 3]$ uniformly distributed, creating noise that maintains the topological character while being geometrically distinguishable from the signal.

\paragraph{Generation.} For each combination of topological object, noise type, and signal probability, we generate 10 independent datasets, each containing $n = 1000$ points. We sample signal probabilities at 50 equally spaced points from 0 to 1, providing detailed coverage of the transition from pure noise ($p_{signal}=0$) to pure signal ($p_{signal}=1$). This results in 500 datasets per topological object (50 probability values × 10 simulations each), totaling 2,000 datasets across the four noise conditions for spheres, plus 500 datasets each for torus and Klein bottle experiments. We use different random seeds for each simulation to ensure statistical independence while maintaining reproducibility.

This appendix presents complementary results across five additional topological scenarios examined in our study, demonstrating the generalizability of \method~beyond the 4D torus results presented in the main text. We explore Klein bottle and spheres with four types of noise: line, Laplace, cube, and plane configurations.

\subsection{Death scale discrimination}
Additional results across all topological scenarios (Figure \ref{fig:rq1_appendix}) demonstrate consistency in \method's discriminative power. The key patterns mirror those observed in the 4D torus: strong discrimination between signal and noise populations with log-rank $p$-values consistently below 0.05 across the signal probability spectrum, and characteristic decreasing median survival differences that transition from positive (signal features survive longer) at low signal probabilities to negative at high signal probabilities.
\begin{figure*}
    \centering
    \includegraphics[width=0.9\linewidth]{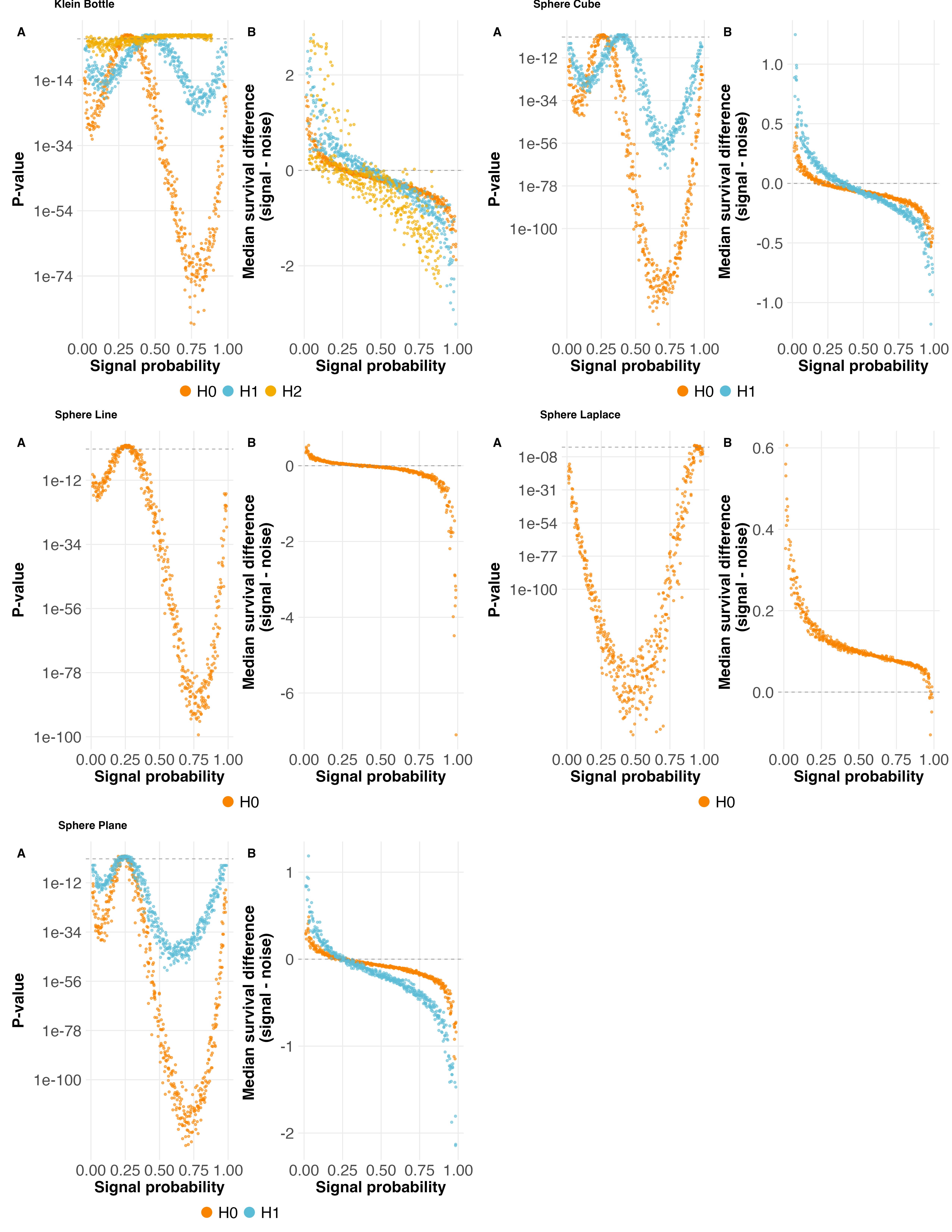}
    \caption{Signal vs. noise discrimination analysis across Klein bottle, and spheres with four type of noise (line, Laplace, cube, plane). Each point represents one simulation from 500 synthetic datasets spanning 50 signal probabilities in [0,1] with 10 independent realizations each. (A) shows log-rank test $p$-values comparing signal and noise survival distributions, and (B) shows median survival time differences (signal minus noise).}
    \label{fig:rq1_appendix}
\end{figure*}

\begin{table}[h]
\centering
\caption{Type~I error for parametric distributions ($n=500$ replications).}
\label{tab:type1_parametric}
\begin{tabular}{lccc}
\toprule
Distribution & Observed $\hat{\alpha}$ & 95\% CI & Status \\
\midrule
Exponential($\lambda=1$) & 0.062 & [0.044, 0.087] & \checkmark Valid \\
Weibull($k=2$) & 0.062 & [0.044, 0.087] & \checkmark Valid \\
LogNormal($\mu=0, \sigma=1$) & 0.044 & [0.029, 0.066] & \checkmark Valid \\
\bottomrule

    \end{tabular}
    \vskip -0.1in
\end{table}

\paragraph{Interpretation.} Signal-vs-signal tests show conservative Type~I error because pure manifold samples produce highly reproducible topological structure. This is expected behavior: when both groups are drawn from the same well-defined manifold, their persistence diagrams are nearly identical, making false rejections rare.

\section{Permutation calibration of the asymptotic log-rank test}\label{app:permutation_calibration}
 
The asymptotic log-rank test (Eq.~\ref{eq:logrank}) assumes exchangeability of pooled persistence values across diagrams. Within a single diagram, persistence values are not independent: in $H_0$ they are nested by the elder rule, and in $H_k$ for $k \geq 1$ they are coupled through homological constraints on births and deaths. Letting $\bar\rho$ denote the average within-diagram correlation between persistence values, the variance of $\sum_j (d_{Aj} - E_{Aj})$ deviates from its working-independence form by a factor approximately $1 + (n - 1)\bar\rho$. When $\bar\rho > 0$ the test is anti-conservative; when $\bar\rho < 0$ --- the regime relevant for elder-rule coupling, where one feature's survival comes at the expense of another's --- the test is conservative.
 
The direction of the effect is theoretically clear, but its empirical magnitude on real persistence diagrams is not. We validate it by comparison against an assumption-free reference test.
 
\paragraph{Setup.}
For each of the synthetic manifold scenarios used to assess Type~I error in Section~\ref{sec:hyp_test_results} (4D torus, 4D Klein bottle, 3D sphere with four noise geometries; both $H_0$ and $H_1$, yielding 16 conditions in total\footnote{Klein bottle has trivial $H_2$, the sphere has trivial $H_1$ in some noise configurations; we restrict to conditions where the dimension contains features.}), we generate $n = 200$ paired samples under both the null and alternative. For each replicate we compute:
\begin{itemize}[leftmargin=*]
    \item the asymptotic log-rank $p$-value $p_{\mathrm{LR}}$ from Eq.~\eqref{eq:logrank};
    \item a permutation $p$-value $p_{\mathrm{perm}}$ obtained by shuffling group labels at the diagram level and recomputing the log-rank statistic; we use $B = 1{,}000$ permutations.
\end{itemize}
Diagram-level permutation preserves within-diagram dependence exactly --- whatever correlation exists between persistence values within a diagram is carried through every permuted replicate --- and is therefore exchangeable under the null without any independence assumption.
 
\paragraph{Findings.}
Table~\ref{tab:permutation_calibration} reports per-condition decision concordance (fraction of replicates where $p_{\mathrm{LR}}$ and $p_{\mathrm{perm}}$ agree on reject/retain at $\alpha = 0.05$), Spearman rank correlation between $p_{\mathrm{LR}}$ and $p_{\mathrm{perm}}$ across replicates, and the empirical rejection rate of the asymptotic test. Across all 16 conditions, decision concordance is $\geq 99\%$ and Spearman $\rho \geq 0.99$. Where the two tests disagree, the asymptotic test rejects less often than the permutation test, consistent with the conservative direction predicted under negative within-diagram correlation. Empirical rejection rates of the asymptotic test are at or below the nominal $\alpha = 0.05$ in every condition.
 
\begin{table}[h]
\centering
\caption{Per-condition agreement between the asymptotic log-rank test and a diagram-level permutation test. Decision concordance: fraction of replicates in agreement on reject/retain at $\alpha = 0.05$. Spearman $\rho$: rank correlation between the two $p$-values across replicates. $\hat\alpha_{\mathrm{LR}}$: empirical rejection rate of the asymptotic test ($n = 200$ replications per condition). Conditions with no features in a given dimension are omitted.}
\label{tab:permutation_calibration}
\small
\begin{tabular}{llccc}
\toprule
Manifold & Dim & Concordance & Spearman $\rho$ & $\hat\alpha_{\mathrm{LR}}$ \\
\midrule
Torus (4D)             & $H_0$ & 1.000 & 0.998 & 0.025 \\
                       & $H_1$ & 0.995 & 0.996 & 0.075 \\
Klein bottle (4D)      & $H_0$ & 1.000 & 0.999 & 0.000 \\
                       & $H_1$ & 1.000 & 0.997 & 0.015 \\
Sphere, cube noise     & $H_0$ & 1.000 & 0.999 & 0.020 \\
                       & $H_1$ & 0.995 & 0.995 & 0.025 \\
Sphere, plane noise    & $H_0$ & 1.000 & 0.998 & 0.025 \\
                       & $H_1$ & 0.990 & 0.993 & 0.030 \\
Sphere, line noise     & $H_0$ & 1.000 & 0.999 & 0.020 \\
                       & $H_1$ & 1.000 & 0.997 & 0.020 \\
Sphere, Laplace noise  & $H_0$ & 1.000 & 0.999 & 0.025 \\
                       & $H_1$ & 0.995 & 0.996 & 0.030 \\
\midrule
\textbf{Aggregate (16 conditions)} & --- & $\geq 0.99$ & $\geq 0.99$ & $\leq 0.05$ \\
\bottomrule
\end{tabular}
\end{table}
 
\paragraph{Conclusion.}
Within-diagram dependence under elder-rule and homological coupling is conservative for the asymptotic log-rank test in practice. The agreement between the asymptotic test and a diagram-level permutation reference is high enough ($\geq 99\%$ decision agreement, Spearman $\rho \geq 0.99$) that we adopt the asymptotic test as the user-facing recommendation. We treat this validation as a one-time check rather than a pipeline component: practitioners deploying \method~do not need to run the permutation reference per analysis. Diagram-level permutation remains available as a fallback for settings where exchangeability cannot be presumed (e.g., diagrams with strongly hierarchical filtrations).

\section{Power Analysis: Additional Manifolds}
\label{app:power}
We analyze \method's power to detect differences between signal (manifold) and noise features using simulated torus data with varying signal probability $p \in [0.02, 0.98]$. Figure~\ref{fig:power} shows empirical power as a function of hazard ratio for $H_0$ (connected components) and $H_1$ (loops) features.

\begin{figure}
    \centering
    \includegraphics[width=1\linewidth]{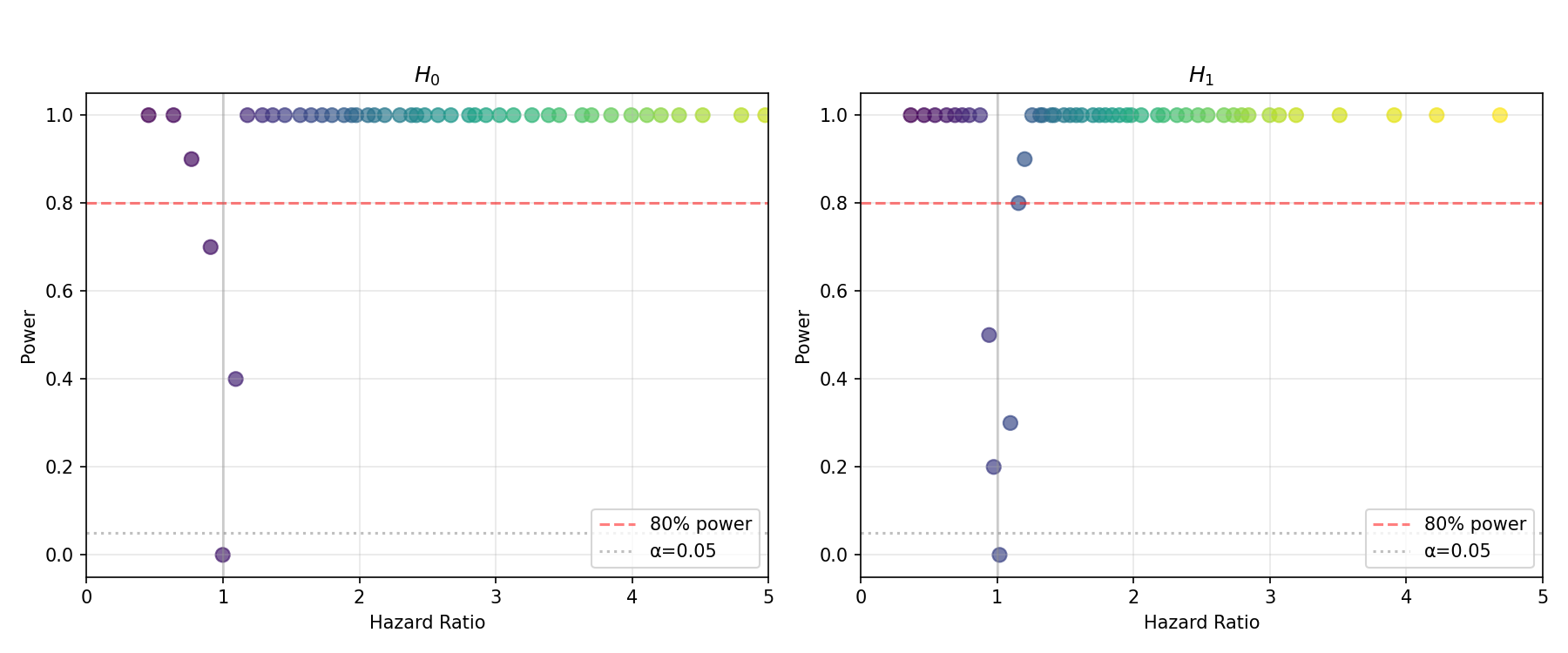}
    \caption{Power vs hazard ratio for \method~two-sample test on simulated torus data. Left: $H_0$ features. Right: $H_1$ features. Color indicates signal probability. The V-shaped curve confirms test validity: power $\approx 0$ at HR $= 1$ (no true difference) and power $\to 1$ as $|\log(\text{HR})|$ increases.}
    \label{fig:power}
\end{figure}

\begin{table}
    \centering
    \caption{Empirical power at key hazard ratios (Torus, $n=200$ replications per condition).}
    \label{tab:power_summary}
    \begin{tabular}{ccc}
        \toprule
        Hazard Ratio & $H_0$ Power & $H_1$ Power \\
        \midrule
        0.5 & 100\% & 100\% \\
        0.67 & 100\% & 100\% \\
        1.0 & 0\% & 0\% \\
        1.5 & 100\% & 100\% \\
        2.0 & 100\% & 100\% \\
        \bottomrule   

    \end{tabular}
    \vskip -0.1in
\end{table}

Key findings include: (1) power exceeds 80\% when HR deviates substantially from 1.0; (2) the V-shaped power curve confirms test validity, power $\approx 0\%$ at HR $= 1.0$ and approaches 100\% as $|\log(\text{HR})|$ increases; (3) $H_0$ features show slightly higher power due to larger sample sizes (more connected components than loops).

\section{Benchmark}\label{app:bench_results}
\subsection{Datasets}
Table \ref{tab:datasets} summarizes dataset characteristics. This benchmark selection follows established practice in topological machine learning~\cite{carriere2017sliced,adamsPersistenceImagesStable2017} and provides diversity across: (i)~graph sizes (14--284 average nodes), (ii)~classification difficulty (binary and multi-class), (iii)~domain semantics (molecular structure vs.\ social dynamics), and (iv)~class balance characteristics. The molecular datasets test whether \method~captures chemically meaningful topological features, while social network datasets assess generalization to domains where filtration semantics differ.

\begin{table}[t]
\centering
\caption{Dataset statistics. Molecular datasets (top) represent chemical compounds; social network datasets (middle) represent user interactions; 3D point cloud datasets (bottom) are sampled from surface meshes.}
\label{tab:datasets}
\small
\begin{tabular}{lrrl}
\toprule
Dataset & Samples & Classes & Domain \\
\midrule
MUTAG       & 188   & 2 & Mutagenicity \\
PTC-MR      & 344   & 2 & Toxicity \\
PROTEINS    & 1{,}113  & 2 & Protein structure \\
ENZYMES     & 600   & 6 & Enzyme function \\
NCI1        & 4{,}110  & 2 & Anti-cancer activity \\
DHFR        & 467   & 2 & Drug activity \\
COX2        & 467   & 2 & Drug activity \\
DD          & 1{,}178  & 2 & Protein structure \\
\midrule
IMDB-B      & 1{,}000  & 2 & Movie collaboration \\
IMDB-M      & 1{,}500  & 3 & Movie collaboration \\
\midrule
SwissCheese & 1{,}000  & 2   & Synthetic 3D point clouds \\
Corals      & 83       & 2   & 3D coral surface meshes \\
MCB         & 2{,}441  & 21  & CAD mechanical components \\
ModelNet10  & 4{,}899  & 10  & CAD object meshes \\
\bottomrule
\end{tabular}
\vskip -0.1in
\end{table}

\subsection{Hyperparameter selection}\label{app:hyperparams}

For all baselines, hyperparameters were selected by grid search under the common
cross-validation protocol of Section~\ref{sec:vectorisation_results} (stratified
10-fold for graphs; the 10 predefined splits of \citet{graf2025flood} for point
clouds), with selection performed on the training portion of each split only. Table~\ref{tab:hyperparam_grids} lists the search grids.

\begin{table}[h]
\centering
\caption{Hyperparameter search grids.}
\label{tab:hyperparam_grids}
\small
\begin{tabular}{ll}
\toprule
Method / Component & Search grid \\
\midrule
\multicolumn{2}{@{}l}{\textit{Persistence kernels}} \\
SWK  & bandwidth $\in \{0.1, 1, 10\}$; \# directions $= 10$ \\
PSSK & bandwidth $\sigma \in \{0.1, 1, 10\}$ \\
\midrule
\multicolumn{2}{@{}l}{\textit{Classifiers}} \\
SVM (RBF, vectorisations) & $C \in \{10^{-2}, 10^{-1}, 1, 10, 10^{2}\}$; $\gamma \in \{\text{scale}, 10^{-3}, 10^{-2}, 10^{-1}\}$ \\
SVM (precomputed, kernels) & $C \in \{10^{-2}, 10^{-1}, 1, 10, 10^{2}\}$ \\
Random Forest & \# trees $\in \{100, 300, 500\}$; max depth $\in \{\text{None}, 10, 20\}$; \\
              & \quad max features $\in \{\sqrt{\cdot},\, \log_2\}$ \\
\bottomrule
\end{tabular}
\end{table}

\subsection{Results}
Tables~\ref{tab:appendix_accuracy_svm}--\ref{tab:appendix_mcc_rf} present complete results for all \method~variants and baselines across all metrics and classifiers. Values are reported as mean $\pm$ standard deviation over 10-fold cross-validation.

\begin{table*}[t]
\centering
\caption{Accuracy (\%) for all methods (SVM). S=\method, B=\method-B (birth entry), F=\method-F (birth features). Numbers indicate grid size. Bold indicates best per row.}
\label{tab:appendix_accuracy_svm}
\tiny
\begin{tabular}{lcccccccccccc}
\toprule
Dataset & S25 & S50 & S100 & B25 & B50 & B100 & F25 & F50 & F100 & PI & PL & BC \\
\midrule
COX2 & 78.2±0.8 & 78.2±0.8 & 78.2±0.8 & 78.2±0.8 & 78.2±0.8 & 78.2±0.8 & 78.5±1.1 & 78.5±1.1 & 78.5±1.1 & 78.7±1.2 & 78.2±0.8 & \textbf{78.8}±1.4 \\
DD & 70.4±2.5 & 70.0±2.7 & 69.6±2.5 & 71.3±2.8 & 69.9±2.6 & 69.4±2.5 & 71.1±2.8 & 70.9±2.5 & 70.3±3.1 & 75.0±3.2 & 72.1±3.5 & \textbf{75.4}±3.8 \\
DHFR & 64.7±4.4 & 64.1±4.6 & 62.8±4.5 & 60.8±1.2 & 60.8±0.6 & 61.0±0.5 & \textbf{67.6}±4.4 & 66.5±5.0 & 66.8±4.9 & 64.3±4.6 & 61.0±0.5 & 62.6±3.2 \\
ENZYMES & 25.6±4.9 & 24.9±4.6 & 24.1±4.0 & 22.6±4.2 & 21.7±4.4 & 21.9±4.1 & \textbf{27.0}±4.2 & 26.8±3.4 & 26.4±3.8 & 24.3±4.2 & 25.2±4.9 & 22.0±3.7 \\
IMDB-BINARY & 63.6±3.7 & 63.6±3.7 & 64.0±3.8 & 61.2±4.8 & 61.1±4.8 & 60.8±5.1 & 63.7±4.2 & 63.8±4.1 & \textbf{64.2}±4.2 & 62.5±4.0 & 63.7±4.0 & 63.4±3.8 \\
IMDB-MULTI & 41.0±3.2 & \textbf{41.3}±3.5 & 41.2±2.9 & 39.6±2.9 & 39.6±2.9 & 39.6±2.9 & 40.5±3.6 & 40.7±3.2 & 40.5±3.2 & 40.3±4.0 & 39.2±2.7 & 38.4±2.8 \\
MUTAG & 77.9±8.7 & 74.9±9.1 & 75.8±7.7 & 71.3±8.6 & 69.0±5.7 & 67.2±3.8 & 74.9±7.9 & 75.8±7.7 & 74.0±6.9 & \textbf{78.8}±8.2 & 72.2±7.5 & 78.4±8.3 \\
NCI1 & 62.3±2.0 & 62.2±2.0 & 62.3±1.8 & 62.4±2.0 & 62.6±2.1 & 62.4±2.1 & 62.5±1.9 & 62.4±2.0 & 61.9±1.9 & \textbf{63.0}±2.0 & 60.4±2.5 & 62.0±1.8 \\
PROTEINS & 66.4±3.6 & 65.7±3.8 & 65.8±3.7 & 65.8±3.2 & 65.7±3.1 & 65.7±3.1 & 66.4±3.5 & 66.1±3.5 & 66.2±3.2 & \textbf{71.0}±4.2 & 70.6±4.4 & 70.0±4.5 \\
PTC-MR & 56.8±5.5 & 56.9±6.0 & 55.9±5.4 & \textbf{58.5}±2.8 & 57.6±2.4 & 57.6±2.4 & 57.0±5.6 & 56.0±5.2 & 56.0±5.2 & 55.3±6.5 & 54.7±8.0 & 53.6±6.0 \\
\midrule
\textbf{Avg} & 60.7 & 60.2 & 60.0 & 59.2 & 58.6 & 58.4 & 60.9 & 60.7 & 60.5 & \textbf{61.3} & 59.7 & 60.5 \\
\bottomrule
\end{tabular}
\end{table*}

\begin{table*}[t]
\centering
\caption{Accuracy (\%) for all methods (RF). S=\method, B=\method-B (birth entry), F=\method-F (birth features). Numbers indicate grid size. Bold indicates best per row.}
\label{tab:appendix_accuracy_rf}
\tiny
\begin{tabular}{lcccccccccccc}
\toprule
Dataset & S25 & S50 & S100 & B25 & B50 & B100 & F25 & F50 & F100 & PI & PL & BC \\
\midrule
COX2 & 76.8±3.8 & 77.0±4.2 & 76.5±4.1 & 76.2±3.1 & 76.2±3.1 & 75.9±3.0 & 77.5±3.7 & 76.9±3.6 & 77.1±3.7 & 77.5±3.9 & \textbf{78.2}±0.8 & 77.9±4.2 \\
DD & 73.3±2.4 & 73.1±2.7 & 71.8±3.1 & 73.3±3.3 & 73.1±3.5 & 72.8±3.3 & 74.1±2.4 & 73.1±3.0 & 73.1±3.3 & \textbf{75.2}±3.5 & 72.3±3.2 & 74.9±3.7 \\
DHFR & 71.0±4.0 & 70.9±4.3 & 71.1±4.0 & 66.6±5.0 & 66.2±4.7 & 66.0±4.6 & 72.0±4.0 & 72.0±3.9 & 72.1±4.0 & \textbf{72.5}±4.3 & 61.0±0.5 & 71.3±4.2 \\
ENZYMES & 23.4±5.4 & 22.4±4.6 & 21.8±4.9 & 23.8±4.4 & 23.9±4.6 & 23.1±4.4 & 27.6±4.4 & 27.6±4.6 & 27.7±5.1 & 27.3±4.8 & \textbf{27.8}±6.0 & 27.6±5.4 \\
IMDB-BINARY & 62.5±4.1 & 63.8±3.5 & 62.8±3.0 & 64.7±3.4 & \textbf{65.3}±3.7 & 65.0±3.0 & 64.9±4.4 & 65.1±4.4 & 64.9±4.2 & 64.1±4.8 & 61.7±5.0 & 61.4±4.1 \\
IMDB-MULTI & 38.9±3.1 & 40.2±3.1 & \textbf{41.8}±3.3 & 38.0±2.8 & 38.3±2.9 & 38.0±2.6 & 41.3±3.0 & 41.2±3.0 & 41.4±2.9 & 41.2±3.4 & 41.5±2.8 & 39.9±2.7 \\
MUTAG & 77.7±8.5 & 77.7±8.5 & 77.7±8.5 & 70.8±9.6 & 70.8±9.6 & 70.8±9.6 & 77.7±8.5 & 77.7±8.5 & 77.7±8.5 & \textbf{77.9}±8.4 & 73.8±7.7 & 76.6±8.6 \\
NCI1 & 61.7±2.1 & 61.5±2.1 & 61.4±2.2 & 62.6±2.3 & 62.6±2.4 & 62.6±2.3 & 62.1±2.1 & 62.1±2.1 & 61.8±1.9 & \textbf{63.1}±2.1 & 61.0±2.5 & 62.0±1.9 \\
PROTEINS & 68.3±5.1 & 67.9±5.1 & 67.6±5.0 & 67.5±4.3 & 67.4±4.5 & 67.3±4.6 & 68.8±4.4 & 68.7±4.5 & 68.4±4.3 & \textbf{70.7}±4.0 & 68.9±3.8 & 70.4±4.4 \\
PTC-MR & 51.6±6.7 & 51.6±6.7 & 51.6±6.9 & \textbf{58.1}±7.8 & \textbf{58.1}±7.8 & 57.9±7.7 & 51.7±7.6 & 51.9±7.3 & 51.8±7.2 & 54.2±7.0 & 54.7±8.0 & 54.4±7.6 \\
\midrule
\textbf{Avg} & 60.5 & 60.6 & 60.4 & 60.2 & 60.2 & 59.9 & 61.8 & 61.6 & 61.6 & \textbf{62.4} & 60.1 & 61.6 \\
\bottomrule
\end{tabular}
\end{table*}

\begin{table*}[t]
\centering
\caption{Bal. Acc. (\%) for all methods (SVM). S=\method, B=\method-B (birth entry), F=\method-F (birth features). Numbers indicate grid size. Bold indicates best per row.}
\label{tab:appendix_balanced_accuracy_svm}
\tiny
\begin{tabular}{lcccccccccccc}
\toprule
Dataset & S25 & S50 & S100 & B25 & B50 & B100 & F25 & F50 & F100 & PI & PL & BC \\
\midrule
COX2 & 50.0±0.0 & 50.0±0.0 & 50.0±0.0 & 50.0±0.0 & 50.0±0.0 & 50.0±0.0 & 50.8±1.8 & 50.8±1.8 & 50.8±1.8 & \textbf{51.5}±2.5 & 50.0±0.0 & \textbf{51.5}±2.6 \\
DD & 67.2±2.9 & 66.9±3.2 & 66.4±2.9 & 68.1±3.2 & 66.4±2.8 & 65.7±2.8 & 68.0±3.2 & 67.8±2.8 & 66.9±3.4 & 73.1±3.2 & 68.9±3.8 & \textbf{73.3}±3.9 \\
DHFR & 60.4±5.1 & 60.2±5.2 & 58.9±5.3 & 50.1±1.2 & 49.9±0.4 & 50.0±0.0 & \textbf{65.5}±4.6 & 64.5±5.3 & 65.4±5.0 & 60.6±5.1 & 50.0±0.0 & 54.4±3.4 \\
ENZYMES & 25.6±4.9 & 24.9±4.6 & 24.1±4.0 & 22.6±4.2 & 21.7±4.4 & 21.9±4.1 & \textbf{27.0}±4.2 & 26.8±3.4 & 26.4±3.8 & 24.3±4.2 & 25.2±4.9 & 22.0±3.7 \\
IMDB-BINARY & 63.6±3.7 & 63.6±3.7 & 64.0±3.8 & 61.2±4.8 & 61.1±4.8 & 60.8±5.1 & 63.7±4.2 & 63.8±4.1 & \textbf{64.2}±4.2 & 62.5±4.0 & 63.7±4.0 & 63.4±3.8 \\
IMDB-MULTI & 41.0±3.2 & \textbf{41.3}±3.5 & 41.2±2.9 & 39.6±2.9 & 39.6±2.9 & 39.6±2.9 & 40.5±3.6 & 40.7±3.2 & 40.5±3.2 & 40.3±4.0 & 39.2±2.7 & 38.4±2.8 \\
MUTAG & 71.0±10.3 & 67.9±10.8 & 67.4±8.9 & 62.8±9.5 & 55.3±7.0 & 51.4±4.8 & 67.4±9.1 & 67.4±8.9 & 63.8±8.6 & \textbf{75.2}±10.0 & 62.3±8.4 & 72.7±10.5 \\
NCI1 & 62.3±2.0 & 62.3±2.0 & 62.3±1.8 & 62.4±2.0 & 62.6±2.1 & 62.4±2.1 & 62.5±1.9 & 62.4±1.9 & 62.0±1.9 & \textbf{63.1}±2.0 & 60.4±2.5 & 62.1±1.7 \\
PROTEINS & 60.8±4.1 & 60.0±4.2 & 60.0±4.2 & 59.0±3.7 & 58.9±3.7 & 58.9±3.7 & 60.4±4.1 & 60.0±4.0 & 60.1±3.8 & \textbf{68.5}±4.4 & 67.4±4.4 & 67.1±4.8 \\
PTC-MR & 53.7±5.8 & 53.7±6.3 & 52.5±5.4 & 53.1±2.8 & 52.0±2.4 & 52.0±2.4 & \textbf{53.9}±5.9 & 52.8±5.4 & 52.8±5.4 & 52.4±5.9 & 52.6±7.5 & 51.0±5.9 \\
\midrule
\textbf{Avg} & 55.6 & 55.1 & 54.7 & 52.9 & 51.8 & 51.3 & 56.0 & 55.7 & 55.3 & \textbf{57.1} & 54.0 & 55.6 \\
\bottomrule
\end{tabular}
\end{table*}

\begin{table*}[t]
\centering
\caption{Bal. Acc. (\%) for all methods (RF). S=\method, B=\method-B (birth entry), F=\method-F (birth features). Numbers indicate grid size. Bold indicates best per row.}
\label{tab:appendix_balanced_accuracy_rf}
\tiny
\begin{tabular}{lcccccccccccc}
\toprule
Dataset & S25 & S50 & S100 & B25 & B50 & B100 & F25 & F50 & F100 & PI & PL & BC \\
\midrule
COX2 & 56.4±5.4 & 56.8±6.2 & 56.5±6.1 & 50.9±3.5 & 50.9±3.3 & 50.3±3.0 & 57.3±5.5 & 56.6±5.1 & 57.2±5.6 & 57.5±5.3 & 50.0±0.0 & \textbf{57.5}±6.5 \\
DD & 71.4±2.8 & 71.3±3.0 & 69.8±3.5 & 71.4±3.5 & 71.0±3.7 & 70.8±3.5 & 72.3±2.5 & 71.1±3.3 & 71.1±3.5 & \textbf{73.8}±3.6 & 70.0±3.5 & 72.9±3.9 \\
DHFR & 68.8±3.9 & 68.6±4.2 & 69.0±3.9 & 64.0±5.2 & 63.7±4.9 & 63.5±4.7 & 70.1±4.0 & 70.0±3.8 & 70.1±3.7 & \textbf{70.4}±4.5 & 50.0±0.0 & 69.2±4.5 \\
ENZYMES & 23.4±5.4 & 22.4±4.6 & 21.8±4.9 & 23.8±4.4 & 23.9±4.6 & 23.1±4.4 & 27.6±4.4 & 27.6±4.6 & 27.7±5.1 & 27.3±4.8 & \textbf{27.8}±6.0 & 27.6±5.4 \\
IMDB-BINARY & 62.5±4.1 & 63.8±3.5 & 62.8±3.0 & 64.7±3.4 & \textbf{65.3}±3.7 & 65.0±3.0 & 64.9±4.4 & 65.1±4.4 & 64.9±4.2 & 64.1±4.8 & 61.7±5.0 & 61.4±4.1 \\
IMDB-MULTI & 38.9±3.1 & 40.2±3.1 & \textbf{41.8}±3.3 & 38.0±2.8 & 38.3±2.9 & 38.0±2.6 & 41.3±3.0 & 41.2±3.0 & 41.4±2.9 & 41.2±3.4 & 41.5±2.8 & 39.9±2.7 \\
MUTAG & 74.9±9.5 & 74.9±9.5 & \textbf{75.0}±9.5 & 63.6±10.3 & 63.6±10.3 & 63.6±10.3 & 74.9±9.5 & 74.9±9.5 & 74.9±9.5 & 74.8±9.7 & 65.2±8.9 & 73.6±9.6 \\
NCI1 & 61.7±2.1 & 61.6±2.1 & 61.4±2.2 & 62.6±2.3 & 62.6±2.4 & 62.6±2.3 & 62.1±2.1 & 62.1±2.1 & 61.8±1.9 & \textbf{63.1}±2.1 & 61.0±2.5 & 62.0±1.9 \\
PROTEINS & 66.1±5.2 & 65.7±5.1 & 65.4±5.1 & 65.3±4.2 & 65.3±4.4 & 65.2±4.6 & 66.7±4.6 & 66.6±4.7 & 66.4±4.6 & \textbf{68.7}±4.0 & 66.9±3.9 & 68.5±4.6 \\
PTC-MR & 50.7±6.4 & 50.6±6.4 & 50.7±6.6 & \textbf{56.4}±7.8 & 56.4±7.7 & 56.1±7.6 & 50.7±7.2 & 50.9±7.0 & 50.8±6.9 & 53.5±6.7 & 53.6±7.9 & 53.8±7.1 \\
\midrule
\textbf{Avg} & 57.5 & 57.6 & 57.4 & 56.1 & 56.1 & 55.8 & 58.8 & 58.6 & 58.6 & \textbf{59.4} & 54.8 & 58.6 \\
\bottomrule
\end{tabular}
\end{table*}

\begin{table*}[t]
\centering
\caption{F1 (\%) for all methods (SVM). S=\method, B=\method-B (birth entry), F=\method-F (birth features). Numbers indicate grid size. Bold indicates best per row.}
\label{tab:appendix_f1_svm}
\tiny
\begin{tabular}{lcccccccccccc}
\toprule
Dataset & S25 & S50 & S100 & B25 & B50 & B100 & F25 & F50 & F100 & PI & PL & BC \\
\midrule
COX2 & 68.6±1.1 & 68.6±1.1 & 68.6±1.1 & 68.6±1.1 & 68.6±1.1 & 68.6±1.1 & 69.4±2.1 & 69.4±2.1 & 69.4±2.1 & \textbf{70.0}±2.4 & 68.6±1.1 & 70.0±2.7 \\
DD & 69.1±3.0 & 68.7±3.3 & 68.3±3.0 & 70.0±3.2 & 68.3±2.9 & 67.6±2.9 & 69.9±3.2 & 69.7±2.8 & 68.8±3.5 & 74.6±3.2 & 70.8±3.8 & \textbf{74.9}±3.9 \\
DHFR & 63.0±4.9 & 62.7±5.0 & 61.4±5.0 & 46.9±1.9 & 46.1±0.5 & 46.2±0.6 & \textbf{67.3}±4.4 & 66.2±5.1 & 66.7±4.9 & 63.0±5.1 & 46.2±0.6 & 55.8±4.0 \\
ENZYMES & 24.7±5.4 & 23.5±5.2 & 22.5±4.7 & 21.5±4.2 & 20.4±4.6 & 20.7±4.1 & \textbf{26.6}±4.5 & 26.3±3.7 & 26.0±4.0 & 23.0±4.5 & 22.7±5.1 & 20.1±4.3 \\
IMDB-BINARY & 62.6±3.8 & 62.6±3.8 & 62.8±4.0 & 60.9±4.7 & 60.8±4.7 & 60.6±5.0 & 62.9±4.2 & 62.9±4.1 & \textbf{63.3}±4.2 & 61.8±4.0 & 62.8±4.1 & 62.5±3.8 \\
IMDB-MULTI & 37.3±3.7 & \textbf{37.8}±4.0 & 37.7±3.3 & 31.5±2.4 & 31.5±2.4 & 31.5±2.4 & 37.0±4.2 & 37.5±3.7 & 37.1±3.6 & 37.2±4.7 & 34.4±3.5 & 32.8±3.8 \\
MUTAG & 76.1±10.0 & 72.9±10.2 & 73.1±9.0 & 68.6±9.6 & 60.1±8.1 & 54.7±6.6 & 72.8±8.8 & 73.1±9.0 & 69.9±9.0 & \textbf{78.1}±8.7 & 67.5±9.3 & 76.8±9.5 \\
NCI1 & 61.5±2.1 & 61.4±2.0 & 61.5±1.9 & 61.3±2.1 & 61.6±2.3 & 61.1±2.2 & 61.4±2.1 & 61.3±2.2 & 61.1±2.0 & \textbf{62.1}±2.2 & 60.0±2.6 & 61.4±1.9 \\
PROTEINS & 62.7±4.6 & 61.7±4.7 & 61.7±4.7 & 59.9±4.5 & 59.9±4.5 & 59.9±4.5 & 62.0±4.7 & 61.5±4.7 & 61.6±4.4 & \textbf{70.4}±4.3 & 69.6±4.4 & 69.1±4.7 \\
PTC-MR & 52.3±7.2 & 52.2±8.0 & 50.7±7.0 & 45.6±5.3 & 43.6±4.6 & 43.6±4.6 & \textbf{52.6}±7.5 & 51.3±7.1 & 51.2±7.0 & 51.3±6.1 & 52.5±7.4 & 50.4±6.4 \\
\midrule
\textbf{Avg} & 57.8 & 57.2 & 56.8 & 53.5 & 52.1 & 51.4 & 58.2 & 57.9 & 57.5 & \textbf{59.1} & 55.5 & 57.4 \\
\bottomrule
\end{tabular}
\end{table*}

\begin{table*}[t]
\centering
\caption{F1 (\%) for all methods (RF). S=\method, B=\method-B (birth entry), F=\method-F (birth features). Numbers indicate grid size. Bold indicates best per row.}
\label{tab:appendix_f1_rf}
\tiny
\begin{tabular}{lcccccccccccc}
\toprule
Dataset & S25 & S50 & S100 & B25 & B50 & B100 & F25 & F50 & F100 & PI & PL & BC \\
\midrule
COX2 & 73.2±4.1 & 73.3±4.7 & 73.0±4.6 & 69.5±2.9 & 69.5±2.9 & 69.0±2.6 & 73.9±3.9 & 73.3±3.7 & 73.6±4.0 & 74.0±4.0 & 68.6±1.1 & \textbf{74.2}±4.8 \\
DD & 72.9±2.6 & 72.7±2.8 & 71.3±3.3 & 72.9±3.4 & 72.6±3.7 & 72.3±3.4 & 73.7±2.4 & 72.6±3.1 & 72.7±3.4 & \textbf{75.0}±3.5 & 71.7±3.4 & 74.4±3.8 \\
DHFR & 70.7±3.9 & 70.6±4.2 & 70.8±3.9 & 66.1±5.1 & 65.8±4.8 & 65.6±4.6 & 71.8±4.0 & 71.8±3.8 & 71.9±3.8 & \textbf{72.2}±4.3 & 46.2±0.6 & 71.0±4.3 \\
ENZYMES & 23.1±5.6 & 22.0±4.6 & 21.5±4.9 & 23.3±4.5 & 23.2±4.6 & 22.4±4.4 & 27.1±4.4 & 27.1±4.7 & 27.3±5.2 & 26.8±5.0 & \textbf{27.3}±6.1 & 27.1±5.4 \\
IMDB-BINARY & 62.3±4.1 & 63.7±3.6 & 62.7±3.0 & 64.0±3.5 & 64.6±3.9 & 64.5±3.2 & 64.8±4.4 & \textbf{65.0}±4.4 & 64.7±4.3 & 64.0±4.8 & 61.6±5.0 & 61.3±4.0 \\
IMDB-MULTI & 35.6±3.4 & 36.7±3.4 & \textbf{38.6}±3.8 & 34.1±3.2 & 34.2±3.2 & 34.0±3.0 & 37.9±3.6 & 37.9±3.4 & 38.1±3.4 & 37.8±3.9 & 38.2±3.3 & 36.2±2.8 \\
MUTAG & 77.4±8.4 & 77.4±8.4 & \textbf{77.4}±8.4 & 68.4±10.5 & 68.4±10.5 & 68.4±10.5 & 77.4±8.4 & 77.4±8.4 & 77.4±8.4 & \textbf{77.4}±8.5 & 70.3±9.1 & 76.2±8.6 \\
NCI1 & 61.5±2.0 & 61.4±2.0 & 61.2±2.1 & 62.2±2.4 & 62.3±2.4 & 62.3±2.4 & 61.9±2.0 & 61.8±2.1 & 61.5±1.9 & \textbf{62.9}±2.1 & 60.6±2.5 & 61.7±1.9 \\
PROTEINS & 67.8±5.2 & 67.4±5.1 & 67.1±5.1 & 67.1±4.2 & 67.0±4.4 & 66.9±4.6 & 68.4±4.4 & 68.3±4.6 & 68.0±4.4 & \textbf{70.3}±3.9 & 68.5±3.8 & 70.0±4.4 \\
PTC-MR & 50.9±6.5 & 50.9±6.4 & 50.9±6.6 & \textbf{56.6}±8.0 & 56.6±7.9 & 56.3±7.8 & 51.0±7.3 & 51.1±7.1 & 51.1±7.0 & 53.5±6.9 & 53.8±7.7 & 53.8±7.5 \\
\midrule
\textbf{Avg} & 59.5 & 59.6 & 59.5 & 58.4 & 58.4 & 58.2 & 60.8 & 60.6 & 60.6 & \textbf{61.4} & 56.7 & 60.6 \\
\bottomrule
\end{tabular}
\end{table*}

\begin{table*}[t]
\centering
\caption{MCC (\%) for all methods (SVM). S=\method, B=\method-B (birth entry), F=\method-F (birth features). Numbers indicate grid size. Bold indicates best per row.}
\label{tab:appendix_mcc_svm}
\tiny
\begin{tabular}{lcccccccccccc}
\toprule
Dataset & S25 & S50 & S100 & B25 & B50 & B100 & F25 & F50 & F100 & PI & PL & BC \\
\midrule
COX2 & 0.00±0.00 & 0.00±0.00 & 0.00±0.00 & 0.00±0.00 & 0.00±0.00 & 0.00±0.00 & 0.05±0.10 & 0.05±0.10 & 0.05±0.10 & 0.08±0.13 & 0.00±0.00 & \textbf{0.08}±0.13 \\
DD & 0.38±0.06 & 0.37±0.06 & 0.36±0.06 & 0.40±0.06 & 0.37±0.06 & 0.35±0.06 & 0.39±0.06 & 0.39±0.06 & 0.37±0.07 & 0.48±0.07 & 0.41±0.08 & \textbf{0.49}±0.08 \\
DHFR & 0.23±0.10 & 0.22±0.11 & 0.19±0.11 & 0.01±0.08 & -0.01±0.03 & 0.00±0.00 & \textbf{0.32}±0.09 & 0.29±0.11 & 0.31±0.10 & 0.22±0.11 & 0.00±0.00 & 0.13±0.10 \\
ENZYMES & 0.11±0.06 & 0.10±0.06 & 0.09±0.05 & 0.07±0.05 & 0.06±0.05 & 0.06±0.05 & \textbf{0.13}±0.05 & 0.12±0.04 & 0.12±0.05 & 0.09±0.05 & 0.11±0.06 & 0.07±0.05 \\
IMDB-BINARY & 0.29±0.08 & 0.29±0.08 & 0.30±0.08 & 0.23±0.10 & 0.23±0.10 & 0.22±0.10 & 0.29±0.09 & 0.29±0.09 & \textbf{0.30}±0.09 & 0.26±0.08 & 0.29±0.09 & 0.28±0.08 \\
IMDB-MULTI & 0.13±0.05 & \textbf{0.13}±0.06 & 0.13±0.05 & 0.11±0.05 & 0.11±0.05 & 0.11±0.05 & 0.12±0.06 & 0.12±0.05 & 0.12±0.05 & 0.11±0.07 & 0.10±0.04 & 0.09±0.05 \\
MUTAG & 0.48±0.22 & 0.41±0.23 & 0.43±0.20 & 0.31±0.23 & 0.17±0.20 & 0.04±0.12 & 0.41±0.20 & 0.43±0.20 & 0.36±0.21 & \textbf{0.53}±0.19 & 0.33±0.19 & 0.50±0.20 \\
NCI1 & 0.26±0.04 & 0.26±0.04 & 0.26±0.04 & 0.26±0.04 & 0.27±0.04 & 0.27±0.04 & 0.26±0.04 & 0.26±0.04 & 0.25±0.04 & \textbf{0.28}±0.04 & 0.21±0.05 & 0.25±0.04 \\
PROTEINS & 0.27±0.10 & 0.25±0.10 & 0.26±0.10 & 0.26±0.09 & 0.26±0.09 & 0.26±0.09 & 0.27±0.10 & 0.26±0.09 & 0.27±0.09 & \textbf{0.39}±0.09 & 0.38±0.10 & 0.36±0.10 \\
PTC-MR & 0.09±0.14 & 0.08±0.15 & 0.06±0.15 & \textbf{0.15}±0.12 & 0.10±0.11 & 0.10±0.11 & 0.09±0.15 & 0.06±0.14 & 0.06±0.14 & 0.07±0.14 & 0.07±0.17 & 0.03±0.14 \\
\midrule
\textbf{Avg} & 0.22 & 0.21 & 0.21 & 0.18 & 0.16 & 0.14 & 0.23 & 0.23 & 0.22 & \textbf{0.25} & 0.19 & 0.23 \\
\bottomrule
\end{tabular}
\end{table*}

\begin{table*}[t]
\centering
\caption{MCC (\%) for all methods (RF). S=\method, B=\method-B (birth entry), F=\method-F (birth features). Numbers indicate grid size. Bold indicates best per row.}
\label{tab:appendix_mcc_rf}
\tiny
\begin{tabular}{lcccccccccccc}
\toprule
Dataset & S25 & S50 & S100 & B25 & B50 & B100 & F25 & F50 & F100 & PI & PL & BC \\
\midrule
COX2 & 0.18±0.15 & 0.18±0.16 & 0.17±0.16 & 0.04±0.15 & 0.05±0.15 & 0.02±0.13 & 0.21±0.14 & 0.18±0.13 & 0.19±0.14 & \textbf{0.21}±0.15 & 0.00±0.00 & 0.21±0.17 \\
DD & 0.44±0.05 & 0.44±0.06 & 0.41±0.07 & 0.44±0.07 & 0.44±0.08 & 0.43±0.07 & 0.46±0.05 & 0.44±0.07 & 0.44±0.07 & \textbf{0.48}±0.07 & 0.42±0.07 & 0.48±0.08 \\
DHFR & 0.39±0.08 & 0.38±0.09 & 0.39±0.08 & 0.29±0.11 & 0.28±0.10 & 0.28±0.10 & 0.41±0.08 & 0.41±0.08 & 0.41±0.08 & \textbf{0.42}±0.09 & 0.00±0.00 & 0.39±0.09 \\
ENZYMES & 0.08±0.07 & 0.07±0.06 & 0.06±0.06 & 0.09±0.05 & 0.09±0.06 & 0.08±0.05 & 0.13±0.05 & 0.13±0.06 & 0.13±0.06 & 0.13±0.06 & \textbf{0.13}±0.07 & 0.13±0.06 \\
IMDB-BINARY & 0.25±0.08 & 0.28±0.07 & 0.26±0.06 & 0.31±0.07 & \textbf{0.32}±0.07 & 0.31±0.06 & 0.30±0.09 & 0.30±0.09 & 0.30±0.09 & 0.28±0.10 & 0.24±0.10 & 0.23±0.08 \\
IMDB-MULTI & 0.09±0.05 & 0.12±0.05 & \textbf{0.14}±0.05 & 0.08±0.05 & 0.08±0.05 & 0.08±0.04 & 0.13±0.05 & 0.13±0.05 & 0.14±0.05 & 0.13±0.06 & 0.14±0.05 & 0.11±0.05 \\
MUTAG & 0.51±0.18 & 0.51±0.18 & 0.51±0.18 & 0.31±0.24 & 0.31±0.24 & 0.31±0.24 & 0.51±0.18 & 0.51±0.18 & 0.51±0.18 & \textbf{0.51}±0.19 & 0.39±0.19 & 0.49±0.19 \\
NCI1 & 0.24±0.04 & 0.23±0.04 & 0.23±0.04 & 0.26±0.05 & 0.26±0.05 & 0.26±0.05 & 0.25±0.04 & 0.25±0.04 & 0.24±0.04 & \textbf{0.27}±0.04 & 0.22±0.05 & 0.24±0.04 \\
PROTEINS & 0.33±0.11 & 0.32±0.11 & 0.32±0.11 & 0.32±0.09 & 0.31±0.09 & 0.31±0.10 & 0.34±0.09 & 0.34±0.10 & 0.34±0.09 & \textbf{0.39}±0.08 & 0.35±0.08 & 0.38±0.09 \\
PTC-MR & 0.02±0.13 & 0.01±0.13 & 0.02±0.14 & 0.14±0.17 & \textbf{0.14}±0.16 & 0.13±0.16 & 0.02±0.15 & 0.02±0.15 & 0.02±0.14 & 0.07±0.14 & 0.08±0.16 & 0.08±0.15 \\
\midrule
\textbf{Avg} & 0.25 & 0.25 & 0.25 & 0.23 & 0.23 & 0.22 & 0.28 & 0.27 & 0.27 & \textbf{0.29} & 0.20 & 0.27 \\
\bottomrule
\end{tabular}
\end{table*}

\newpage
\section{Further analysis on ABIDE dataset}\label{app:abide_robustness}
\paragraph{Robustness across sites and motion.}
We assess robustness via a panel of 20 sensitivity scenarios per atlas (19 leave-one-site-out plus one low-motion subset; pre-registered, no FDR adjustment). At $H_1$, \method~detects the effect with HR $> 1$ and CI excluding 1 in 60/60 scenarios across the three atlases, with $p < 0.05$ in 59/60 (the single near-miss, AAL90 with NYU held out, has $p = 0.056$ but CI still excludes 1). The within-atlas HR ranges are tight (Schaefer 1.13--1.16; CC200 1.09--1.12; AAL90 1.08--1.11), indicating that site choice has limited influence on the per-atlas magnitude. At $H_0$, the cross-atlas pattern observed in the primary holds throughout: Schaefer-400 detects the effect in 18/20 scenarios with all 20 CIs excluding 1, while CC200 and AAL90 yield 0/20 significant scenarios with HRs centered at 0.97 and CIs spanning 1.
 
\paragraph{Effect-size attenuation across parcellation resolution.}
The HR magnitude at $H_1$ shrinks monotonically with parcellation coarseness: 1.13--1.16 (Schaefer-400, 400 ROIs), 1.09--1.12 (CC200, 200 ROIs), 1.08--1.11 (AAL90, 90 ROIs), with non-overlapping per-atlas windows. The pattern is consistent with finer parcellations resolving more of the topological structure that distinguishes the two groups, attenuating the per-feature hazard ratio at coarser resolutions. This characterisation of the effect-size dependence on parcellation choice is unique to \method~among the methods compared: the baselines provide $p$-values that are similarly significant across atlases (all $p < 0.05$ at $H_1$ on every atlas) but cannot quantify the structural effect-size shrinkage that the survival framework makes explicit.

\section{Discussion of methodological choices}\label{app:discussion}
 
This appendix discusses several methodological choices that practitioners may wish to revisit, and clarifies the relationship between \method~and existing curve-based summaries.
 
\subsection{Why \method~rather than \method-B as default}\label{app:strand_b}
 
The left-truncated estimator \method-B is the principled choice when the structural object of interest is the conditional death-time distribution given birth, i.e., $S_B(t) = \mathbb{P}(d > t \mid b < t)$. Yet our classification benchmarks (Tables~\ref{tab:appendix_accuracy_svm}--\ref{tab:appendix_mcc_rf}) consistently show \method-B underperforming both \method~and \method-F. This is not a defect of left truncation but a consequence of what left truncation removes from the risk set.
 
Each diagram contains a long tail of short-lived features (those with persistence below the $W_1$ noise scale). Under the marginal estimator of Eq.~\eqref{eq:km_standard}, these features enter the risk set at $t = 0$ and contribute to $\hat{S}(t)$ for small $t$, where their distribution is class-discriminative even when individually they would be matched to the diagonal under $W_1$. Under \method-B, the same features enter the risk set only at $t = b_i$ and exit at $t = d_i$, so they contribute to $\hat{S}_B$ over the narrow interval $[b_i, d_i]$ rather than uniformly across $[0, d_i]$. The marginal estimand thus pools more information per feature than the conditional estimand, even though the latter is closer to the underlying generative process. This trade-off favours the marginal in finite samples, which is the regime relevant for downstream classification.
 
For applications where the conditional distribution is itself the scientific object of interest --- e.g., comparing how the rate of feature death changes with filtration scale across cohorts --- \method-B remains the appropriate variant. We retain it as a documented option rather than the default.
 
\subsection{Geometric distinction from Betti curves}\label{app:vs_betti}
 
Both \method~and the Betti curve are real-valued summaries of a persistence diagram derived from indicator counts, but they slice the diagram along different axes. The Betti curve at filtration value $t$ counts features alive at scale $t$:
\begin{equation}
    \beta(t) = \#\bigl\{ i : b_i \leq t \leq d_i \bigr\},
\end{equation}
which corresponds geometrically to a vertical strip in the diagram that translates along the diagonal as $t$ increases. The persistence survival function counts features whose lifetime exceeds a threshold:
\begin{equation}
    \hat{S}(t) = \frac{1}{n}\#\bigl\{ i : d_i - b_i > t \bigr\},
\end{equation}
which corresponds geometrically to the half-plane above the line $d - b = t$, i.e., a diagonal sweep moving away from the diagonal as $t$ increases. The two summaries operate on different axes of the diagram (filtration parameter for $\beta$, persistence value for $\hat{S}$), and consequently respond differently to translations $(b_i, d_i) \mapsto (b_i + c, d_i + c)$: $\beta(t)$ shifts along its argument, whereas $\hat{S}$ is invariant.
 
This invariance is structural rather than cosmetic: it is precisely what makes the multiset $\{p_i\}$ closed under the natural symmetry of feature lifetime, and consequently what makes the survival-analysis machinery applicable. The multiset $\{(b_i, d_i)\}$ is not closed under this symmetry and does not admit the same treatment.
 
\paragraph{On hypothesis testing through Betti curves.} A discretised Betti curve can be plugged into a generic permutation test \citep{islambekov2024vector,robinson2017hypothesis}, as can any vectorisation including \method's. The contribution of \method~is not that hypothesis testing is unavailable for Betti curves; it is that the survival representation natively yields a hypothesis test (the log-rank test) along with an interpretable effect size (the hazard ratio) and per-scale localisation (the Kaplan-Meier curve), without requiring an external permutation wrapper.
 
\subsection{When proportional hazards fails}\label{app:ph_failure}
 
The hazard-ratio interpretation of the log-rank test relies on the proportional hazards (PH) assumption: $\lambda_B(t) = \mathrm{HR} \cdot \lambda_A(t)$ for all $t$. The validity of the log-rank test as a Type~I-controlled procedure does not depend on PH; only the interpretation of $\widehat{\mathrm{HR}}$ as a single summary effect does. Three regimes are worth distinguishing:
 
\begin{enumerate}[leftmargin=*]
    \item \textbf{PH holds.} Schoenfeld residuals~\citep{schoenfeld1982partial} fail to reject PH; $\widehat{\mathrm{HR}}$ is a meaningful summary; we report it with bootstrap CI.
    \item \textbf{PH violated, monotone deviation.} Hazard ratios cross or change monotonically over the persistence axis. The log-rank test remains valid; $\widehat{\mathrm{HR}}$ becomes an average of time-varying ratios. We recommend reporting the median persistence difference $\Delta_{\mathrm{med}} = m_A - m_B$ (Section~\ref{sec:hypothesis_testing}) as the primary summary, with the Kaplan-Meier curves shown for transparency.
    \item \textbf{PH violated, non-monotone deviation.} Time-varying coefficient models~\citep{handorf2024analysis} or stratified log-rank tests can be applied. These are standard tools in survival analysis and apply directly to the persistence survival framework without modification.
\end{enumerate}
 
In all our experiments (synthetic manifolds, graph benchmarks, ABIDE), PH was not rejected at the $\alpha = 0.05$ level for $\geq 94\%$ of pairwise comparisons, supporting the use of $\widehat{\mathrm{HR}}$ as the primary reported effect size while acknowledging its dependence on the assumption.
 
\subsection{On test hypersensitivity from large feature pools}\label{app:hypersensitivity}
 
Pooling persistence values across many diagrams can produce $n_{\mathrm{eff}}$ in the thousands, raising concern that the log-rank test will detect statistically significant but practically negligible differences. This is a general property of well-powered tests~\citep{wasserstein2016asa}, not specific to \method.
 
The persistence survival framework is, however, equipped to diagnose this regime explicitly. A significant $p$-value with $\widehat{\mathrm{HR}} \approx 1$ (CI tight around 1) indicates a detectable but negligible effect, which is exactly the diagnostic that scalar-$p$-value-only methods (e.g., the permutation test of \citet{robinson2017hypothesis}) cannot provide. Practitioners can read off the magnitude alongside the significance, and report findings using both. Our power analysis (Appendix~\ref{app:power}, Figure~\ref{fig:power}) corroborates that the test is not spuriously sensitive: at $\mathrm{HR} = 1$ the empirical rejection rate is at the nominal $\alpha$ level (Table~\ref{tab:power_summary}), and the V-shaped power curve confirms that rejections concentrate where the true HR deviates from 1.


\end{document}